\newtcbox{\blueinlinebox}[1][]{
 enhanced,
 before=\adjustbox{valign=c}\bgroup,
 after=\egroup,
 colback=blue!20,
 colframe=white,
 coltext=black,
 size=small,
 left=0pt,
 right=0pt,
 boxsep=2pt,
 #1
}
\newtcbox{\orangeinlinebox}[1][]{
 enhanced,
 before=\adjustbox{valign=c}\bgroup,
 after=\egroup,
 colback=orange!50,
 colframe=white,
 coltext=black,
 size=small,
 left=0pt,
 right=0pt,
 boxsep=2pt,
 #1
}
\newtcbox{\redinlinebox}[1][]{
 enhanced,
 before=\adjustbox{valign=c}\bgroup,
 after=\egroup,
 colback=red!30,
 colframe=white,
 coltext=black,
 size=small,
 left=0pt,
 right=0pt,
 boxsep=2pt,
 #1
}
\newtcbox{\greeninlinebox}[1][]{
 enhanced,
 before=\adjustbox{valign=c}\bgroup,
 after=\egroup,
 colback=green!30,
 colframe=white,
 coltext=black,
 size=small,
 left=0pt,
 right=0pt,
 boxsep=2pt,
 #1
}
\newcommand{\pfa}{\ensuremath{\mathcal{P}}\xspace} 
\newcommand{\apfa}{\ensuremath{\widetilde{\mathcal{P}}}\xspace} 
\newcommand{\refp}{\mathbf{r}}					
\newcommand{\vq}{\mathbf{y}}					
\newcommand{\vl}{\mathbf{l}}					
\newcommand{\vu}{\mathbf{u}}					
\newcommand{\poi}{\ensuremath{\operatorname{PoI}}\xspace}					
\newcommand{\hv}{\ensuremath{\operatorname{HV}}\xspace}
\newcommand{\hvi}{\ensuremath{\Delta}\xspace}
\newcommand{\dom}{\ensuremath{\mbox{dom}}\xspace} 
\newcommand{\ndom}{\ensuremath{\mbox{ndom}}\xspace} 
\newcommand{\X}{\ensuremath{\mathrm{X}}\xspace} 
\newcommand{\Y}{\ensuremath{\mathrm{Y}}\xspace} 
\newcommand{\E}{\ensuremath{\mathcal{D}}\xspace} 
\newcommand{\diag}{\ensuremath{\operatorname{diag}}\xspace}
\newcommand{\domain}{\ensuremath{\mathcal{X}}\xspace} 
\newcommand{\R}{\ensuremath{\mathbb{R}}\xspace}
\newcommand{\BF}[1]{
	\relax
	\ifmmode
	\ifcat\noexpand#1\relax 
		\boldsymbol{#1}     
	\else
		\mathbf{#1}
	\fi
	\else
		\textbf{#1}
	\fi
}
\newcommand{\PDF}{\ensuremath{\operatorname{PDF}}}
\newcommand{\CDF}{\ensuremath{\operatorname{CDF}}}
\newcommand{\GP}{\ensuremath{\operatorname{gp}}}
\newcommand{\ud}{\ensuremath{\mathrm{d}}}
\newcommand{\floor}[1]{\ensuremath{\lfloor#1\rfloor}}
\newtheorem{theorem}{Theorem}
\newtheorem{lemma}{Lemma}
\newtheorem*{remark}{Remark}
\newtcbox{\blueinlineboxDots}[1][]{
 enhanced,
 before=\adjustbox{valign=c}\bgroup,
 after=\egroup,
 colback=blue!20,
 colframe=white,
 coltext=black,
 size=small,
 left=0pt,
 right=0pt,
 boxsep=2pt,
 overlay={%
    \fill[pattern=dots] (frame.south west) rectangle (frame.north east);
 }
 #1
}
\newtcbox{\orangeinlineboxDots}[1][]{
 enhanced,
 before=\adjustbox{valign=c}\bgroup,
 after=\egroup,
 colback=orange!50,
 colframe=white,
 coltext=black,
 size=small,
 left=0pt,
 right=0pt,
 boxsep=2pt,
 overlay={%
     \fill[pattern=dots] (frame.south west) rectangle (frame.north east);
    }
 #1
}
\newtcbox{\redinlineboxDots}[1][]{
 enhanced,
 before=\adjustbox{valign=c}\bgroup,
 after=\egroup,
 colback=red!30,
 colframe=white,
 coltext=black,
 size=small,
 left=0pt,
 right=0pt,
 boxsep=2pt,
 overlay={%
    \fill[pattern=dots] (frame.south west) rectangle (frame.north east);
 }
 #1
}
\newtcbox{\greeninlineboxDots}[1][]{
 enhanced,
 before=\adjustbox{valign=c}\bgroup,
 after=\egroup,
 colback=green!30,
 colframe=white,
 coltext=black,
 size=small,
 left=0pt,
 right=0pt,
 boxsep=2pt,
 overlay={%
    \fill[pattern=dots] (frame.south west) rectangle (frame.north east);
 }
 #1
}
\definecolor{indiagreen}{rgb}{0.07, 0.53, 0.03}
\newcommand{\reddot}{\raisebox{0.6pt}{\tikz{\fill[red!100] (0,0) circle (0.08)}}}
\newcommand{\greendot}{\raisebox{0.6pt}{\tikz{\fill[indiagreen!100] (0,0) circle (0.08)}}}
\newcommand{\orangedot}{\raisebox{0.6pt}{\tikz{\fill[orange!100] (0,0) circle (0.08)}}}
\theoremstyle{plain}
\theoremstyle{definition}
\theoremstyle{remark}
\icmltitlerunning{Probability Distribution of Hypervolume Improvement in Bi-objective Bayesian Optimization}
\begin{document}

\twocolumn[
\icmltitle{Probability Distribution of Hypervolume Improvement in Bi-objective Bayesian Optimization}



\icmlsetsymbol{equal}{*}

\begin{icmlauthorlist}
\icmlauthor{Hao Wang}{leiden}
\icmlauthor{Kaifeng Yang}{fhooe}
\icmlauthor{Michael Affenzeller}{fhooe}
\end{icmlauthorlist}

\icmlaffiliation{leiden}{Leiden University, Leiden, The Netherlands}
\icmlaffiliation{fhooe}{University of Applied Sciences, Hagenberg, Austria}

\icmlcorrespondingauthor{Kaifeng Yang}{kaifeng.yang@fh-hagenberg.at}

\icmlkeywords{Machine Learning, ICML}

\vskip 0.3in
]



\printAffiliationsAndNotice{}  

\begin{abstract}
Hypervolume improvement (HVI) is commonly employed in multi-objective Bayesian optimization algorithms to define acquisition functions due to its Pareto-compliant property.
Rather than focusing on specific statistical moments of HVI, this work aims to provide the exact expression of HVI's probability distribution for bi-objective problems. Considering a bi-variate Gaussian random variable resulting from Gaussian process (GP) modeling, we derive the probability distribution of its hypervolume improvement via a cell partition-based method. Our exact expression is superior in numerical accuracy and computation efficiency compared to the Monte Carlo approximation of HVI's distribution.
Utilizing this distribution, we propose a novel acquisition function - $\varepsilon$-probability of hypervolume improvement ($\varepsilon$-PoHVI). Experimentally, we show that on many widely-applied bi-objective test problems, $\varepsilon$-PoHVI significantly outperforms other related acquisition functions, e.g., $\varepsilon$-PoI, and expected hypervolume improvement, when the GP model exhibits a large the prediction uncertainty.
\end{abstract}

\section{Introduction}\label{sec:intro}
For solving black-box multi-objective optimization problems (MOPs), the hypervolume indicator (HV)~\cite{zitzler1999multiobjective} is extensively employed for assessing the quality of the Pareto front approximation or guiding the search direction. HV is defined as the Lebesgue measure of the subset of $\mathbb{R}^m$ dominated by an approximation set to the Pareto front. It is extensively applied in many multi-objective optimization algorithms, e.g., indicator-based evolutionary algorithms~\cite{BeumeNE07,DebAPM02} and Bayesian optimization~\cite{EmmerichYD0F16,YangEDB19,DaultonBB20,Emmerich2020,DaultonBB20,DaultonBB21,SuzukiTTSK20,GarridoMerchanFH23,ZuluagaKP16,YangDYBE16}. In multi-objective Bayesian optimization, HV induces the famous hypervolume improvement (HVI) function~\cite{emmerich2005single}, which quantifies the benefit of appending a new data point to the approximation set - the increment of the HV value caused by the new point. HVI generalizes the notion of ``improvement'' in the single-objective scenarios, and therefore, it serves as the base of many successful multi-objective acquisition functions, e.g., the probability of improvement (PoI)~\cite{emmerich2006single,keane2006statistical} that measures the chance of realizing nonzero HVI, $\varepsilon$-PoI that computes the probability of objective points having least $\varepsilon$ distance to the approximation set,  and the expected hypervolume improvement (EHVI)~\cite{emmerich2006single,YangEDB19} that generalizes expected improvement (EI)~\cite{JonesSW98} from single-objective Bayesian optimization.

\paragraph{Motivation} At an unknown decision point, the predictive/posterior distribution of its objective value follows a multivariate normal distribution in Bayesian optimization. Consequently, HVI defined on this predictive distribution is a real-valued random variable. To the best of our knowledge, the existing multi-objective acquisition functions either only consider the first moment of HVI's distribution, e.g., EHVI for the mean, or completely disregard the distribution, e.g., $\varepsilon$-PoI, which is not related to the quantile of HVI's distribution (see Sec.~\ref{sec:acquisition}). However, when HVI shows a large dispersion (e.g., large variance), only relying on the mean of HVI makes the acquisition function less trustworthy/meaningful. In this sense, higher moments or at least the quantiles can help quantify the risk/uncertainty of the acquisition value~\cite{MehlawatGK21,schonlau1998global}, which is difficult to obtain without the exact expression of HVI's distribution.  

\paragraph{Contributions} To answer the issue we raise above, we aim to provide the exact distribution function of HVI and demonstrate its usefulness in Bayesian optimization for speeding up empirical convergence. Our contributions are summarized as follows: 
\begin{enumerate}
    \item We derive the exact expression of HVI's distribution function in the bi-objective optimization scenario and numerically validate it against the Monte Carlo (MC) method. The exact distribution exhibits better computational efficiency and numerical accuracy (Sec.~\ref{sec:hvi-distribution})
    \item We propose a novel acquisition function, $\varepsilon$-Probability of Hypervolume Improvement ($\varepsilon$-PoHVI), which utilizes HVI's distribution function directly. It computes the probability of making at least $\varepsilon$ hypervolume improvement to the current approximation set (Sec.~\ref{sec:acquisition}).
    \item We compare $\varepsilon$-PoHVI, $\varepsilon$-PoI, and EHVI on 14 selected test problems, where we observe $\varepsilon$-PoHVI substantially improves the empirical convergence of Bayesian optimization over $\varepsilon$-PoI and EHVI (Sec.~\ref{sec:experiments}).
\end{enumerate}   

\section{Preliminaries}\label{sec:background}
\paragraph{Multi-objective optimization}
A real-valued multi-objective optimization problem (MOP) involves minimizing multiple objective functions simultaneously, i.e., $\BF{f}=(f_1, \ldots, f_m), f_i:\domain\rightarrow \R$, $\domain \subseteq \R^d, i\in[1..m]$. For every $\mathbf{y}^{(1)}$ and $\mathbf{y}^{(2)}\in \mathbb{R}^m$, we say $\mathbf{y}^{(1)}$ weakly dominates $\mathbf{y}^{(2)}$ (written as $\mathbf{y}^{(1)} \preceq \mathbf{y}^{(2)}$) iff. $y^{(1)}_i \leq y^{(2)}_i$, $i\in[1..m]$. The Pareto order $\prec$ on $\mathbb{R}^m$ is defined: $\mathbf{y}^{(1)} \prec \mathbf{y}^{(2)}$ iff. $\mathbf{y}^{(1)} \preceq \mathbf{y}^{(2)}$ and $\mathbf{y}^{(1)} \neq \mathbf{y}^{(2)}$. A point $\mathbf{x} \in \mathcal{X}$ is efficient iff. $\nexists \BF{x}'\in\domain(\BF{f}(\BF{x}') \prec \BF{f}(\BF{x}))$. The set of all efficient points of $\mathcal{X}$ is called the \emph{efficient set}.
The image of the efficient set under $\mathbf{f}$ is called the \emph{Pareto front}. 
Multi-objective optimization algorithms often employ a finite multiset $\mathrm{X} = \{\mathbf{x}^{(1)}, \dots,\mathbf{x}^{(n)}\}$ to approximate the efficient set, whose image under $\BF{f}$ is denoted by $\Y$. The \emph{non-dominated subset} of $\Y$ is a finite approximation to the Pareto front, which is denoted by $\mathcal{P}$.
\emph{Non-dominated space} w.r.t.~$\pfa$ is the subset of $\R^m$ that is not dominated by \pfa, i.e., $\mbox{ndom} (\pfa):= \{ \BF{y} \in \R^m \colon \nexists \mathbf{p} \in \pfa(\BF{p} \prec \BF{y})\}$. Similarly, the \emph{dominated space} w.r.t. $\pfa$, denoted by $\mbox{dom}(\pfa)$, is the complement of $\mbox{ndom}(\pfa)$. 

\paragraph{Bayesian Optimization} BO~\cite{Mockus74,JonesSW98,ShahriariSWAF16} is a sequential model-based optimization algorithm for solving black-box optimization problems that are expensive to evaluate. BO starts with sampling a small initial set of data points $\X \subseteq \domain$ (with Latin Hypercube Sampling). After evaluating $\X$ with $\BF{f}$, it constructs a probabilistic model $\Pr(\BF{f} \mid \X, \Y)$ (e.g., Gaussian process regression). BO quantifies the quality of unseen points with an acquisition function, which targets balancing exploration and exploitation of the search process.
BO chooses the next point to evaluate by maximizing the acquisition function. Please see~\cite{Knowles06,EmmerichYD0F16,Emmerich2020,DaultonBB20,BelakariaDD19,ZhangG20,TuGKS22,GarridoMerchanFH23} for more details and developments on this topic.

\paragraph{Gaussian process regression}
In this work, we model each objective function independently as the realization of a centered
Gaussian process (GP) prior~\cite{RasmussenW06}, i.e., $f_i \sim \GP(0, k_i), i\in[1..m]$, where $k_i\colon \domain \times \domain \rightarrow \mathbb{R}$ is a kernel function
that models the auto-covariance of $f_i$, $\forall \BF{x},\BF{x}'\in\mathcal{X},  \operatorname{Cov}\{f_i(\BF{x}),f_i(\BF{x}^{\prime})\}=k_i(\BF{x}, \BF{x}^{\prime})$. Given a data set $\E = (\X, \Y)$, $\X=\{\BF{x}^{(1)},\ldots,\BF{x}^{(n)}\}\subset \domain$, and $\Y = \{\mathbf{f}(\mathbf{x}^{(1)}), \dots, \mathbf{f}(\mathbf{x}^{(n)})\}$,
the posterior GPs are independent: $f_i \mid \E \sim \GP(\hat{f}_i, \hat{k}_{i})$, where $\hat{f}_i$ and $\hat{k}_i$ are the posterior means and kernel functions, respectively.
Many works have been devoted to model cross-correlations among GPs~\cite{AlvarezRL12}, e.g., multi-task GP~\cite{BonillaCW07} and dependent GP~\cite{BoyleF04}. 

\paragraph{Hypervolume Improvement}
The hypervolume (HV) indicator of a set $\pfa\subseteq \R^m$ is defined as the Lebesgue measure $\lambda$ of the set that is dominated by $\pfa$ and bounded from above by a reference point $\refp\in\R^m$, i.e., $\hv(\pfa,\refp)=\lambda(\{\BF{y}\in\R^m \colon \BF{y} \prec \refp \wedge \exists\BF{p}\in\pfa, \BF{p}\prec\BF{y}\})$.
The hypervolume indicator is often taken as a performance metric for comparing the empirical performance for multi-objective optimization algorithms~\cite{ZitzlerTLFF03} or used in the indicator-based optimization algorithms~\cite{BeumeNE07}. In Bayesian optimization, the set $\pfa$ is typically the approximation set - the non-dominated subset of the data $\mathcal{D}$. 
The contribution of a single objective vector $\mathbf{y}$ to $\pfa$ can be quantified by the well-known \emph{hypervolume improvement} (HVI):
\begin{equation} \label{eq:hvi}
    \Delta^{+}
    (\BF{y}; \pfa, \refp)=\hv(\pfa\cup\{\BF{y}\}, \refp) - \hv(\pfa, \refp).
\end{equation}
Note that, for a dominated point, i.e., $\BF{y}\in\operatorname{dom}(\pfa)$, its HVI value is zero. Here, we introduce a ``plus'' symbol in $\Delta^{+}$ to indicate that HVI is non-negative. The purpose of this notation shall become clear when we propose a generalization to the definition of HVI (see Sec.~\ref{sec:hvi-distribution}).
HVI underpins many useful acquisition functions in Bayesian optimization. For instance, \emph{Probability of Improvement} (PoI)~\cite{stuckman1988global} is originally devised for single-objective optimization cases and later generalized to multi-objective optimization~\cite{emmerich2006single,keane2006statistical}. It quantifies the probability that $\BF{y}$ lies in the non-dominated space w.r.t.~\pfa: $\poi(\BF{x}; \mathcal{P})=\operatorname{E}\{\mathds{1}_{\mbox{ndom}(\mathcal{P})}(\BF{y})\mid \E, \BF{x}\}$.
Also, $\varepsilon$-\emph{Probability of Improvement} ($\varepsilon$-PoI) is proposed recently~\cite{Emmerich2020} to make the search less exploitative. It measures the probability of the non-dominated points that are at least $\varepsilon$ away from the approximation set \pfa: 
$$
\operatorname{\varepsilon-\poi}(\BF{x}; \mathcal{P}, \varepsilon) = \operatorname{E}\left\{\mathds{1}_{\mbox{ndom}(\mathcal{P})}(\BF{y}+\varepsilon\BF{1}_m)\mid \E, \BF{x}\right\},
$$
where $\BF{1}_m$ is an $m$-dimensional vector of 1's. The computational complexity of $\operatorname{\varepsilon-\poi}$ is $\Theta(n \log n)$ for $m=2,3$~\cite{yang2017computing,Emmerich2020} and $\mathcal{O}(2^{m-1} n^{\floor{\frac{m}{2}}})$ for $m \geq 4$~\cite{yang2019efficient}, where $n=|\pfa|$. Furthermore, \emph{Expected hypervolume improvement}~\cite{emmerich2006single} calculates the expectation of the HVI value of a multivariate Gaussian random variable: $\mbox{EHVI}(\BF{x}; \pfa, \refp)\coloneqq \operatorname{E}\{\Delta^+(\BF{y}; \pfa, \refp)\mid \mathcal{D}, \BF{x} \}$. The time complexity of EHVI's computation is $\Theta(n \log n)$ for $m=2,3$~\cite{EmmerichYD0F16,yang2017computing,yang2019efficient} and $\mathcal{O}(2^{m-1} n^{\floor{\frac{m}{2}}})$ for $m \geq 4$~\cite{yang2019efficient}. 

\section{The Distribution of Hypervolume Improvement} \label{sec:hvi-distribution}
\paragraph{Generalized hypervolume improvement} Prior to deriving the distribution functions, we first propose to generalize the definition of HVI in Eq.~\eqref{eq:hvi} for assigning nonzero values to the dominated points. We define the negative hypervolume ``improvement'' of a dominated point $\BF{y}$ as the \emph{negative} volume of the intersection of the set that dominates $\BF{y}$ with the set dominated by \pfa, namely,
$$
\Delta^{-}(\BF{y}) = -\lambda\left(\{\BF{p}\in\R^m\colon \BF{p}\preceq \BF{y}\}\cap\dom(\pfa)\right),
$$
which penalizes the dominated points that are located far from \pfa. In Fig.~\ref{fig:2dhv}, we depict an example for the negative HVI (point $\BF{b}$). We claim that the negative HVI is desirable/useful to compute for two major reasons: (1) 
It makes the acquisition optimization step more tractable when a large subset of objective points are dominated by \pfa w.h.p., typically when \pfa is quite close to the Pareto front. Since the HVI is nearly zero on this subset, the acquisition function defined on HVI (e.g., PoI and EHVI) will exhibit a large plateau on its optimization landscape, making it harder to maximize. The negative HVI practically mitigates this issue by turning plateaus into valleys. (2) HVI is also extensively employed in evolutionary multi-objective algorithms (EMOAs)~\cite{DebAPM02,BeumeNE07}, where it is necessary to assign nonzero values to dominated points to move such points to the Pareto front. 
Our extension, the negative HVI, facilitates a sensible comparison among dominated points, which is
a more natural extension compared to other existing proposals~\cite{WangRDE15}. 
We combine the negative HVI with Eq.~\eqref{eq:hvi}, resulting in the \emph{generalized hypervolume improvement}:
\begin{equation} \label{eq:generalized-hvi}
    \Delta(\BF{y}) =\mathds{1}_{\ndom(\pfa)}(\BF{y})\Delta^{+}(\BF{y})  + \mathds{1}_{\dom(\pfa)}(\BF{y})\Delta^{-}(\BF{y}).
\end{equation}
Importantly, if one wishes to focus only on the non-dominated points, then the above generalization is exactly the same with Eq.~\eqref{eq:hvi}. 

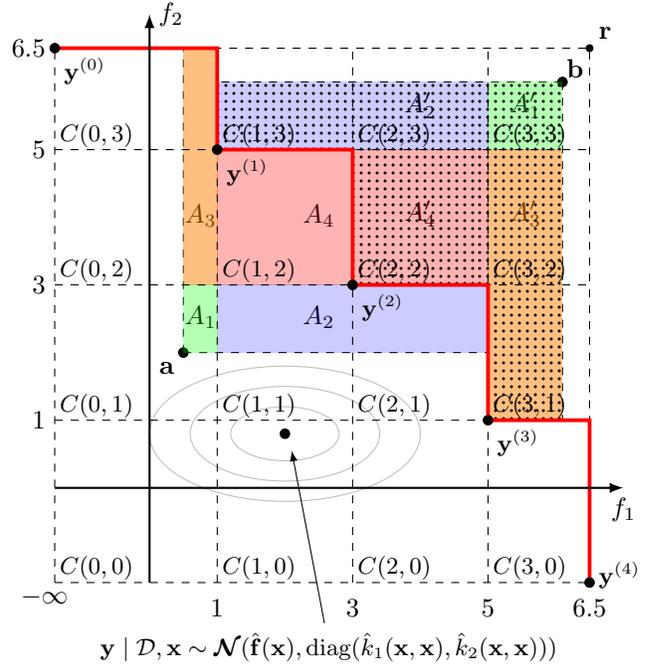
\begin{figure}[!ht]
\begin{tikzpicture}[scale=.9]
    \def\inf{1.4}
    \def\r{0.07}
    \def\refx{6.5}
    \def\refy{6.5}
    \def\x{{-\inf,1,3,5,\refx}}
    \def\y{{-\inf,1,3,5,\refy}}
    \def\pfax{{-\inf,1,3,5,\refx}}
    \def\pfay{{\refy,5,3,1,-\inf}}    
	\node [below left,xshift=3mm,yshift=.5mm,fill opacity=1] at (-\inf, -\inf) {$-\infty$};
    \draw [color=black!25] (2,0.8) ellipse [x radius=2, y radius=1, xscale=1, yscale=1];
    \draw [color=black!25] (2,0.8) ellipse [x radius=2, y radius=1, xscale=.7, yscale=.7];
    \draw [color=black!25] (2,0.8) ellipse [x radius=2, y radius=1, xscale=.4, yscale=.4];
    \draw [color=black,fill=black,fill opacity=1] (2,0.8) circle (\r);
    \node [anchor=north] (distribution) at (2.65, -2) {\small$\BF{y} \mid \E, \BF{x} \sim \bm{\mathcal{N}}(\hat{\BF{f}}(\BF{x}), \diag(\hat{k}_1(\BF{x}, \BF{x}), \hat{k}_2(\BF{x}, \BF{x})))$};
    \draw [-latex] (distribution) -- (2.1, 0.55);
    \draw [color=black, fill=black,fill opacity=1] (0.5, 2) circle (0.07);
    \node [below left,yshift=0mm,fill opacity=1] at (0.5, 2)   {$\BF{a}$};
    \node [above,xshift=2.3mm,yshift=2mm,fill opacity=1] at (0.5, 2)   {$A_1$};
    \node [above,xshift=0mm,yshift=2mm,fill opacity=1] at (2.5, 2)   {$A_2$};
    \node [above,xshift=2.3mm,yshift=2mm,fill opacity=1] at (0.5, 3.5)   {$A_3$};
    \node [above,xshift=0mm,yshift=2mm,fill opacity=1] at (2.5, 3.5)   {$A_4$};
    \node [above,xshift=2.3mm,yshift=2mm,fill opacity=1] at (5.3, 5.1)   {$A_1'$};
    \node [above,xshift=0mm,yshift=2mm,fill opacity=1] at (4, 5.1)   {$A_2'$};
    \node [above,xshift=2.3mm,yshift=2mm,fill opacity=1] at (5.3, 3.5)   {$A_3'$};
    \node [above,xshift=0mm,yshift=2mm,fill opacity=1] at (4, 3.5)   {$A_4'$};
    \draw [dashed] (0.5,2) -- (0.5,\refy);
    \draw [dashed] (0.5,2) -- (5,2);
    \path [fill=orange, opacity=0.5] (0.5,3) rectangle (1,\refy);
    \path [fill=red, opacity=0.3] (1,3) rectangle (3,5);
    \path [fill=blue, opacity=0.2] (1,2) rectangle (5,3);
    \path [fill=green, opacity=0.3] (0.5,2) rectangle (1,3);
    \draw [color=black, fill=black,fill opacity=1] (6.1, 6) circle (\r);
    \node [above right,fill opacity=1,xshift=-.7mm,yshift=-.7mm] at (6.1, 6) {$\BF{b}$};
    \draw [dashed] (6.1,6) -- (6.1,1);
    \draw [dashed] (6.1,6) -- (1,6);
    \path [pattern=dots,pattern color=black][preaction={fill=orange, opacity=0.5}] (5,1) rectangle (6.1,5);
    \path [pattern=dots,pattern color=black][preaction={fill=red, opacity=0.3}] (3,3) rectangle (5,5);
    \path [pattern=dots,pattern color=black][preaction={fill=blue, opacity=0.2}] (1,5) rectangle (5,6);
    \path [pattern=dots,pattern color=black][preaction={fill=green, opacity=0.3}] (5,5) rectangle (6.1,6);
	\foreach \i in {1,3,5,\refy}{
         \node [left,xshift=0mm,fill opacity=1] at (-\inf,\i) {$\i$};
    }
    \foreach \i in {1,3,5,\refy}{
         \node [below,yshift=-1mm,fill opacity=1] at (\i, -\inf)   {$\i$};
    }
    \foreach \i in {0,...,4}{
         \draw [dashed] (\x[\i],-\inf) -- (\x[\i],\refy);
         \draw [dashed] (-\inf,\y[\i]) -- (\refx,\y[\i]);
    }
    \foreach \i in {0,...,3}{
        \draw [red,line width=0.5mm] (\pfax[\i], \pfay[\i]) -- (\pfax[\i+1], \pfay[\i]) -- (\pfax[\i+1], \pfay[\i+1]);
    }
    \foreach \i in {0,...,3}{
    	\draw [color=black,fill=black,fill opacity=1] (\pfax[\i], \pfay[\i]) circle (\r);
	    \node [below right, fill opacity=1] at (\pfax[\i], \pfay[\i]) {\small $\mathbf{y}^{(\i)}$};
    }
    \draw [color=black,fill=black,fill opacity=1] (\pfax[4], \pfay[4]) circle (\r);
	\node [right,yshift=1mm,fill opacity=1] at (\pfax[4], \pfay[4]) {\small $\mathbf{y}^{(4)}$};
	\draw [color=black, fill=black,fill opacity=1] (\refx, \refy) circle (0.05);
	\node [above right,fill opacity=1] at (\refx, \refy) {$\mathbf {r}$};
	\draw [-latex,thick] (-\inf,0) -- (\refx + .5,0);
	\node [below,fill opacity=1] at (\refx+.5,0) {$f_1$};
	\draw [-latex,thick] (0,-\inf) -- (0,\refy + .5);
	\node [right,fill opacity=1] at (0,\refy + .5) {$f_2$};
    \footnotesize
    \foreach \i in {0,...,3}{
        \foreach \j in {0,...,3}{
            \node [above right] at (\x[\i]-0.05,\y[\j]-0.08) {$C(\i,\j)$};
        }
    }
\end{tikzpicture}
\caption{
\label{fig:2dhv}
For a two-dimensional objective space, we picture the augmented Pareto approximation set \apfa by the black dots $\BF{y}^{(0)},\ldots,\BF{y}^{(4)}$ and the attainment boundary by the red curve. The posterior distribution of $\BF{y}$ at a point $\mathbf{x}\in\mathcal{X}$ is illustrated by the light gray ellipsoids. The generalized hypervolume improvement of two realizations $\BF{a}$ and $\BF{b}$ are depicted in the shaded area.
The objective space $[-\BF{\infty}, \BF{r}]$ is partitioned into cells (e.g., $C(1,0)$). When restricting the random point $\BF{y}$ to a cell, its hypervolume can always be expressed in four terms: $\hvi^+(\BF{a})|_{C(0,1)}=\lambda($\greeninlinebox{$A_1$}$)+\lambda($\blueinlinebox{$A_2$}$)+\lambda($\orangeinlinebox{$A_3$}$)+ \lambda($\redinlinebox{$A_4$}) ($\lambda$ is the Lebesgue measure on $\R^2$). 
When a point is dominated by \apfa, its negative hypervolume improvement can be written similarly: $\hvi^-(\BF{b})|_{C(3,3)}=-\lambda($\greeninlineboxDots{$A_1'$}$)-\lambda($\blueinlineboxDots{$A_2'$}$)-\lambda($\orangeinlineboxDots{$A_3'$}$)-\lambda($\redinlineboxDots{$A_4'$}).
}
\end{figure}
Assume a data set $\mathcal{D}=(\mathrm{X}, \mathrm{Y})$ observed on the vector-valued objective function. The approximation set $\pfa\subset \mathbb{R}^m$ to the Pareto front is the non-dominated subset of $\mathrm{Y}$. Also, we assume a reference point $\refp\in \R^m$.
For the bi-objective case ($m=2$), we depict an example of the HVI in Fig.~\ref{fig:2dhv}. In this example, the random point $\mathbf{y}$ follows the posterior distribution of a Gaussian process model, i.e., $\BF{y} \mid \E, \BF{x} \sim \bm{\mathcal{N}}(\hat{\BF{f}}(\BF{x}), \diag(\hat{k}_1(\BF{x}, \BF{x}), \hat{k}_2(\BF{x}, \BF{x})))$.
It can be observed from the figure that the expression of $\hvi^{+}(\BF{y})$ depends on the subset of $\pfa$ that it dominates, indicating the expression of $\hvi^{+}(\BF{y})$ varies across realizations of $\BF{y}$, which brings the difficulty of deriving the distribution function. Note that HVI on $\R^m$ is actually a piecewise-defined function. It suffices to first identify the set on which the restriction of $\hvi^{+}$ admits a fixed expression and then derive the conditional distribution function of HVI on such a set. As the first step, we provide a characterization of such a set.
\begin{lemma} \label{lm:piecewise}
Given a Pareto approximation set $\pfa \subset \R^m$ and a compact and connected set $\mathrm{S} \subset \R^m$ that dominates $\pfa$. If every point in $S$ dominates the same subset of $\pfa$, then the restriction $\hvi^{+}|_{S}$ is not a piecewise function and is continuous.
\end{lemma}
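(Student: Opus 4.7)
The plan is to show that $\hvi^{+}|_S$ is a single polynomial in the coordinates of $\BF{y}$, which is then automatically smooth (hence continuous) and admits one closed-form expression (hence not piecewise). The starting point is the decomposition
\[
\hvi^{+}(\BF{y}) = \prod_{k=1}^m (r_k - y_k) - \lambda_m\!\left(\bigcup_{\BF{p} \in \pfa}[\max(\BF{p}, \BF{y}), \refp]\right),
\]
obtained by splitting the axis-aligned box $[\BF{y}, \refp]$ into the part already dominated by $\pfa$ (the union inside $\lambda_m$) and the newly gained part. Since the first term is a polynomial in $\BF{y}$, the task reduces to showing that the volume of the union is polynomial on $S$.

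Let $Q \subseteq \pfa$ denote the common dominated subset, so $\BF{y} \preceq \BF{p}$ holds for every $\BF{y} \in S$ and every $\BF{p} \in Q$. For $\BF{p} \in Q$ we have $\max(\BF{p}, \BF{y}) = \BF{p}$ on all of $S$, which is a constant. The key subclaim is that for every $\BF{p} \in \pfa \setminus Q$, the coordinate-wise sign pattern of $\BF{y} - \BF{p}$ is also constant on $S$. In the bi-objective case this is a short structural argument: sort $\pfa$ in the first objective so that $Q$ forms a consecutive block $\{\BF{p}^{(i)},\ldots,\BF{p}^{(j)}\}$; a point $\BF{p}^{(k)} \notin Q$ with $k < i$ must then satisfy $y_1 > p_1^{(k)}$ and $y_2 < p_2^{(k)}$ for every $\BF{y} \in S$, since otherwise $\BF{y}$ would either dominate $\BF{p}^{(k)}$ or fail to dominate some point of $Q$, and symmetrically for $k > j$. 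Consequently $\max(\BF{p}, \BF{y})$ is an affine function of $\BF{y}$ on $S$ with a fixed coordinate-selection pattern.

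With this structural fact, inclusion--exclusion rewrites $\lambda_m\!\left(\bigcup_{\BF{p}\in\pfa}[\max(\BF{p}, \BF{y}), \refp]\right)$ as a finite alternating sum over nonempty $T \subseteq \pfa$ of volumes of intersected boxes $\prod_k \big(r_k - [\max_{\BF{p}\in T}\max(\BF{p}, \BF{y})]_k\big)$. Each factor is either a constant or equal to $r_k - y_k$, with the selection pattern fixed on $S$ by the previous paragraph; hence every summand, and therefore the whole sum, is a polynomial in $\BF{y}$. Subtracting from $\prod_k (r_k - y_k)$ yields $\hvi^{+}|_S$ as a single polynomial, proving both continuity and the absence of piecewise behavior. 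The main obstacle in this plan is establishing the constancy of the sign pattern for $\BF{p} \notin Q$: the compactness and connectedness of $S$ together with the assumption that $S$ dominates $\pfa$ (so that $Q$ is nonempty and well-defined) are essential, ruling out pathological configurations in which $S$ could straddle multiple cells of the coordinate-hyperplane partition of $\R^m$ induced by $\pfa$.
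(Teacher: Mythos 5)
Your proof is correct for the bi-objective case and takes a genuinely different route from the paper's. The paper argues top-down: since every $\BF{s}\in S$ dominates the same subset $U\subseteq\pfa$, one may write $\hvi^{+}(\BF{s})=\hv((\pfa\setminus U)\cup\{\BF{s}\})-\hv(\pfa)$, and then asserts that the hypervolume of a mutually non-dominated set with one varying point is a single closed-form expression. You instead work bottom-up: decompose $[\BF{y},\refp]$ into the new box minus the union of overlaps $[\max(\BF{p},\BF{y}),\refp]$, apply inclusion--exclusion, and reduce everything to the subclaim that the coordinatewise comparison of $\BF{y}$ with each $\BF{p}\in\pfa\setminus Q$ resolves the same way throughout $S$. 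This is more work, but it buys two things the paper's proof leaves implicit: an explicit single-polynomial representation of $\hvi^{+}|_S$, and an isolation of the exact structural fact on which the ``not piecewise'' claim rests --- the paper's step from ``mutually non-dominated'' to ``not piecewise-defined'' silently requires the very same fixed-ordering property that your sign-pattern lemma establishes. Your honesty that the subclaim is proved only for $m=2$ is also well placed: the hypothesis ``every point of $S$ dominates the same subset of $\pfa$'' does not by itself lock the sign pattern when $m\geq 3$ (one can build a connected $S$ and a $\BF{p}\notin Q$ for which $y_k-p_k$ changes sign while the dominated subset stays fixed), so the cell-based restriction the paper actually uses is the right hypothesis in higher dimensions; since the paper only deploys the lemma for $m=2$, this does not affect your proof's validity here. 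One small correction: your closing remark overstates the role of compactness and connectedness --- your bi-objective sign-pattern argument uses neither, only that $Q\neq\emptyset$ (guaranteed by ``$S$ dominates $\pfa$'') together with the sortedness of a mutually non-dominated set in $\R^2$; compactness and connectedness are simply inherited hypotheses that your argument does not need.
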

\begin{proof}
Let $U=\{\BF{p}\in\pfa\colon\forall\BF{s}\in S, \BF{s} \prec \BF{p}\}$. Based on the assumption that every point in $S$ dominates the same subset $U$ of $\pfa$, we could reformulate for every point $\BF{s}$, its hypervolume improvement, namely $\hvi^{+}(\BF{s})=\hv((\pfa\setminus U) \cup \{\BF{s}\}) - \hv(\pfa)$. Since $\pfa\setminus U$ and $\{\BF{s}\}$ are mutually non-dominated by construction, $\hv((\pfa\setminus U) \cup \{\BF{s}\})$ is not piecewise-defined and continuous.
\end{proof}
\begin{remark}
Obviously, the largest sets satisfying Lemma~\ref{lm:piecewise} are the hyperboxes constructed by the intersection of the $m-1$ dimensional hyperplanes passing through each point in \pfa. It is easy to verify that the above lemma also applies to the negative HVI. For the bi-objective case ($m=2$), we show an example of those cells in Fig.~\ref{fig:2dhv}. 
\end{remark}

In the paper, we focus on the bi-objective case, and we use the following notations for convenience: $\forall \BF{x}\in\domain$, $\mu_1 = \hat{f}_1(\BF{x}),\mu_2 = \hat{f}_2(\BF{x})$, and $\sigma_1^2 = \hat{k}_1(\BF{x}, \BF{x}),\sigma_2^2 = \hat{k}_2(\BF{x}, \BF{x})$. 

\paragraph{Cell partition of the objective space}
In bi-objective scenarios, we partition the objective space w.r.t.~the approximation points. For a finite approximation set $\mathcal{P}$ of $n$ points, we consider the extended real line $\mathbb{R}\cup \{-\infty,\infty\}$ and use it to augment \pfa with two extreme points $(-\infty, r_2)^\top$ and $(r_1, -\infty)^\top$, i.e., $\widetilde{\pfa}=\pfa \cup \{ (r_1, -\infty), (-\infty, r_2)\}$. 
Without loss of generality, we index the approximation points of $\widetilde{\pfa}$ in the increasing order w.r.t.~their first objective values, i.e., $\vq^{(0)}, \vq^{(1)}, \cdots, \vq^{(n+1)}$ where $-\infty=y^{(0)}_1 < y^{(1)}_1 < \cdots < y^{(n+1)}_1=r_1$. We denote by
$$
C(i,j) = \left[y_1^{(i)}, y_1^{(i+1)}\right] \times \left[y_2^{(n-j+1)}, y_2^{(n-j)}\right], i,j\in[0..n],
$$
the cell area bounded by axis-parallel lines that pass through the points in \pfa. For instance, in Fig.~\ref{fig:2dhv}, where $n=3$, $C(0,1)=[y_1^{(0)}, y_1^{(1)}]\times[y_2^{(3)}, y_2^{(2)}]$. In the following discussion, we will use the minimum $\vl^{(i,j)}$ and maximum $\vu^{(i,j)}$ of $C(i,j)$, namely,
$$
    \vl^{(i,j)} = \left(y_1^{(i)}, y_2^{(n-j+1)}\right)^\top, \, \vu^{(i,j)} = \left(y_1^{(i+1)}, y_2^{(n-j)}\right)^\top.
$$
In this manner, the entire objective space is partitioned into $(n+1)^2$ cells: $[-\BF{\infty}, \BF{r}]=\cup_{i,j\in[0..n]} C(i,j)$. Note that, for $i+j\leq n$, the union of the cells $C(i,j)$ represents the subset that is not dominated by $\widetilde{\pfa}$, i.e., $\mbox{ndom}(\widetilde{\pfa}) = \cup_{i+j\leq n} C(i,j)$ while for $i+j> n$, it indicates the subset dominated by $\widetilde{\pfa}$.

\paragraph{Conditional probability density function}
Taking the cell decomposition, we can express the distribution functions of generalized HVI by marginalizing the conditional distributions over cells:
\begin{align} \label{eq:full-distribution}
     & F_{\hvi(\BF{y})\mid \E}(\delta) =\sum_{i,j\in[0..n]}F_{\hvi(\BF{y})}^{(i, j)}(\delta) \Pr\left(\BF{y} \in C(i,j) \mid \E, \BF{x}\right), \nonumber \\
     &= \sum_{i,j\in[0..n]}F_{\hvi(\BF{y})}^{(i, j)}(\delta) \left[\Phi_{\mu_1, \sigma_1}\left(u_1^{(i,j)}\right) - \Phi_{\mu_1, \sigma_1}\left(l_1^{(i,j)}\right)\right] \nonumber \\
     & \times\left[\Phi_{\mu_2, \sigma_2}\left(u_2^{(i,j)}\right) - \Phi_{\mu_2, \sigma_2}\left(l_2^{(i,j)}\right)\right], 
\end{align}
where $F_{\hvi(\BF{y})}^{(i, j)}\in \{\PDF_{\hvi(\BF{y})}^{(i, j)}, \CDF_{\hvi(\BF{y})}^{(i, j)}\}$ denotes either the cumulative distribution function (CDF) or the probability density function (PDF) of HVI conditioned on cell $C(i,j)$. 
Note that within a non-dominated cell $C(i,j)$ ($i+j\leq n$), the conditional HVI takes it value in $\left[\hvi^{+}(\BF{u}^{(i,j)}), \hvi^{+}(\BF{l}^{(i,j)})\right]$. Similarly, within dominated cell $C(i,j)$ ($i+j > n$), the HVI's range is $\left[\hvi^{-}(\BF{u}^{(i,j)}),\hvi^{-}(\BF{l}^{(i,j)})\right]$.
We shall proceed to derive the conditional PDF as follows.

\begin{theorem}\label{thm:HVI-decomposition}
Assume the above cell partition of the objective space w.r.t. augmented approximation set $\widetilde{\pfa}$. For $0\leq i+j\leq n$, where $i,j \in [0..n]$, the hypervolume improvement of $\mathbf{y}=(y_1,y_2)^\top\in\mathbb{R}^2$ restricted to cell $C(i,j)$ can be expressed as follows:
\begin{align*}
    \Delta^{+}(\mathbf{y})|_{C(i,j)}&=
    \lambda(\greeninlinebox{\text{$A_1$}})
    +\lambda(\blueinlinebox{\text{$A_2$}})
    +\lambda(\orangeinlinebox{\text{$A_3$}})
    + \lambda(\redinlinebox{\text{$A_4$}}),
\end{align*}
where $\lambda$ is the Lebesgue measure on $\mathbb{R}^2$ and
\begin{align*}
    \greeninlinebox{\text{$A_1$}} &= \left[y_1, u_1^{(i, j)}\right] \times \left[y_2, u_2^{(i, j)}\right],  \quad\enskip \\
    \blueinlinebox{\text{$A_2$}} & =\left[u_1^{(i, j)}, y_1^{(n-j+1)}\right]\times\left[y_2, u_2^{(i, j)}\right],\\ \orangeinlinebox{\text{$A_3$}}&=\left[y_1, u_1^{(i, j)}\right]\times\left[u_2^{(i, j)}, y_2^{(i)}\right], \quad \\
    \redinlinebox{\text{$A_4$}} &=\operatorname{dom}\left(\left\{\mathbf{u}^{(i,j)}\right\}\right)\setminus\operatorname{dom}\left(\widetilde{\pfa}\right).
\end{align*}
\end{theorem}
\begin{proof}
An illustration has been given in Fig.~\ref{fig:2dhv}. Let $S=\operatorname{dom}(\{\BF{y}\})\setminus\operatorname{dom}(\widetilde{\pfa})$ denote the set that is dominated by $\BF{y}$ but not by $\widetilde{\pfa}$. 
We first consider the set $A_4$, the subset dominated by the maximum point $\BF{u}^{(i,j)}$, which is clearly contained in $S$ as $\BF{y}|_{C(i,j)}\preceq\BF{u}^{(i,j)}$. Note that, when $\BF{y}$ is restricted to $C(i,j)$, its projection along $f_1$ onto the attainment boundary is always $(y_1^{(n-j+1)}, y_2)^\top$; its projection along $f_2$ onto the attainment boundary is always $(y_1, y_2^{(i)})^\top$. Hence, we can express the reminder set $S\setminus A_4=[y_1, y_1^{(n-j+1)}]\times[y_2, u_2^{(i,j)}] + [y_1, u_1^{(i,j)}]\times[y_2, y_2^{(i)}] - [y_1, u_1^{(i, j)}] \times [y_2, u_2^{(i, j)}]= A_1 + A_2 + A_3$.
\end{proof}
Based on Thm.~\ref{thm:HVI-decomposition}, we can express the HVI restricted to cell $C(i,j)$ as:
\begin{align*}
 \Delta^{+} (\mathbf{y})|_{C(i,j)} & =  (u_1^{(i,j)} - y_1)(u_2^{(i,j)} - y_2)  \\
& + (y_1^{(n-j+1)} - u_1^{(i,j)})(u_2^{(i,j)} - y_2) \\
& + (u_1^{(i,j)} - y_1)(y_2^{(i)} - u_2^{(i,j)}) + \hvi^{+}(\vu^{(i,j)})\\
& =z_1z_2 + \gamma^{(i,j)},
\end{align*}
where
$
z_1 = y_1^{(n+1-j)} - y_1, z_2 = (y_2^{(i)}-y_2), \gamma^{(i,j)}=\hvi^{+}(\vu^{(i,j)}) - (y_1^{(n+1-j)} - u_1^{(i,j)}).
$
Note that $z_1\sim\mathcal{N}(y_1^{(n-j+1)} - \mu_1, \sigma_1^2)$ and $z_2\sim\mathcal{N}(y_2^{(i)} - \mu_2, \sigma_2^2)$ are Gaussian random variables truncated to $[L_1, U_1] \coloneqq [y_1^{(n-j+1)} - u_1^{(i,j)}, y_1^{(n-j+1)} - l_1^{(i,j)}]$ and $[L_2, U_2] \coloneqq [y_2^{(i)} - u_2^{(i,j)}, y_2^{(i)} - l_2^{(i,j)}]$, respectively. Thereby, the distribution of $\Delta^{+}(\mathbf{y})|_{C(i,j)}$ can be expressed via that of products of truncated Gaussians.

Let $\mu_1' = y_1^{(n+1-j)} - \mu_1$, $\mu_2' = y_2^{(i)} - \mu_2$. We have the following expression of HVI's distribution for the non-dominated cells:
$\forall \delta \in \left[\hvi^{+}(\BF{u}^{(i,j)}), \hvi^{+}(\BF{l}^{(i,j)})\right]$,
\begin{align}
    & \PDF_{\hvi^{+}(\mathbf{y})}^{(i,j)}(\delta)  =\PDF_{z_1z_2}(\delta - \gamma^{(i,j)}) \nonumber \\ 
    & \qquad =D_1D_2\int_{\alpha(p)}^{\beta(p)}\phi_{\mu_1',\sigma_1}(x)\phi_{\mu_2',\sigma_2}\left(\frac{p}{x}\right)x^{-1}\ud x, \label{eq:conditional-pdf} \\
    &p= \delta - \gamma^{(i,j)},\quad \nonumber \\
    &D_1=[\Phi_{\mu_1', \sigma_1}(U_1) - \Phi_{\mu_1', \sigma_1}(L_1)]^{-1}, \quad \nonumber \\ 
    & D_2 =[\Phi_{\mu_2', \sigma_2}(U_2) - \Phi_{\mu_2', \sigma_2}(L_2)]^{-1}, \nonumber
\end{align}
where $\phi_{\mu,\sigma}$ and $\Phi_{\mu,\sigma}$ denote the PDF and CDF of a Gaussian random variable with mean $\mu$ and standard deviation $\sigma$, respectively. The integration bounds are determined as follows. If $L_1U_2 < U_1L_2$~\footnote{When $L_1U_2 > U_1L_2$, it suffices to swap variables $z_1$ and $z_2$, and apply Eq.~\eqref{eq:integral-bound}. }, we define:
\begin{equation} \label{eq:integral-bound}
\!\!\![\alpha(p), \beta(p)] = 
\begin{cases}
[L_1, p/L_2],  &  L_1L_2 \leq p < L_1U_2 \\ 
[p / U_2, p / L_2], & L_1U_2 \leq p < U_1L_2 \\
[p / U_2, U_1], &  U_1L_2 \leq p \leq U_1U_2
\end{cases}
\end{equation}
For a cell in the dominated space, i.e., $C(i,j)$ with $i+j>n$, we could invert the coordinate system to treat it in the same way as the non-dominated part (notice that $\BF{l}^{(i, j)}$ and $\BF{u}^{(i, j)}$ are swamped after inversion). Namely, we take the following inverted quantities: 
\begin{equation}\label{eq:inverted-quantities}
\begin{aligned}
&[L_1^{-}, U_1^{-}] \coloneqq [-U_1, -L_1], \quad \nonumber \\
& [L_2^{-}, U_2^{-}] \coloneqq [-U_2, -L_2], \nonumber \\
&\mu_1^{-} \coloneqq -\mu_1', \quad \mu_2^{-} \coloneqq -\mu_2', \quad \\
&
\gamma^{(i,j)} = -\Delta^-(\BF{l}^{(i,j)}) - (l_1^{(i,j)} - y_1^{(n+1-j)}).
\end{aligned}
\end{equation}
It is straightforward to verify that the above derivation still holds for the inverted quantities. In this case, the conditional density function can be computed with Eq.~\eqref{eq:conditional-pdf}: $\forall\delta\in[\Delta^{-}(\BF{u}^{(i,j)}),\Delta^{-}(\BF{l}^{(i,j)})]$,
$$
\PDF_{\hvi^{-}(\mathbf{y})}^{(i,j)}(\delta)=\PDF_{\hvi^{+}(\mathbf{y})}^{(i,j)}(-\delta)=\PDF_{z_1z_2}(-\delta - \gamma^{(i,j)}),
$$
where we substitute the quantities (e.g., $L_1$) with the inverted ones (e.g., $L_1^{-}$) in $\PDF_{z_1z_2}$.

\paragraph{Conditional cumulative distribution function} Taking the conditional density function, the cumulative distribution of the hypervolume improvement can be derived for a non-dominated cell $C(i, j)$ ($i + j \leq n$). For $\delta \in \left[\hvi^{+}(\BF{u}^{(i,j)}), \hvi^{+}(\BF{l}^{(i,j)})\right]$ and $p= \delta - \gamma^{(i,j)}$, we have:
\begin{align}
    &\CDF_{\hvi^{+}(\BF{y})}^{(i,j)}(\delta)  =\int_{L_1L_2}^{p} \PDF_{z_1z_2}\left(x\right) \ud x \nonumber\\
    & =D_1D_2\left(G +\int_{\alpha(p)}^{\beta(p)}\phi_{\mu_1',\sigma_1}(\zeta)\Phi_{\mu_2',\sigma_2}\left(\frac{p}{x}\right)\ud x\right), \label{eq:conditional-cdf}\\ 
    &G=\Phi_{\mu_2',\sigma_2}\left(U_2\right)\left[ \Phi_{\mu_1',\sigma_1}\left(\alpha(p)\right) - \Phi_{\mu_1',\sigma_1}\left(L_1\right)\right] + \nonumber \\
    & \quad \quad \Phi_{\mu_2',\sigma_2}\left(L_2\right)\left[ \Phi_{\mu_1',\sigma_1}\left(L_1\right) - \Phi_{\mu_1',\sigma_1}\left(\beta(p)\right)\right], \nonumber
\end{align}
where the integration bounds $\alpha, \beta$ are defined in Eq.~\eqref{eq:integral-bound}.
For a cell $C(i,j)$ in the dominated space ($i+j>n$), we take the same trick of inverting the coordinate system as above: $\forall \delta \in [\hvi^{-}(\BF{u}^{(i,j)}), \hvi^{-}(\BF{l}^{(i,j)})]$,
$$
\CDF_{\hvi^{-}(\BF{y})}^{(i,j)}(\delta) = 1 - \CDF_{\hvi^{+}(\BF{y})}^{(i,j)}(-\delta),
$$
where we have to take the inverted quantities in Eq.~\eqref{eq:inverted-quantities}.
\begin{figure*}[t]
    \begin{minipage}[c]{0.5\textwidth}
        \includegraphics[width=\textwidth,trim=4mm 2mm 15mm 10mm,clip]{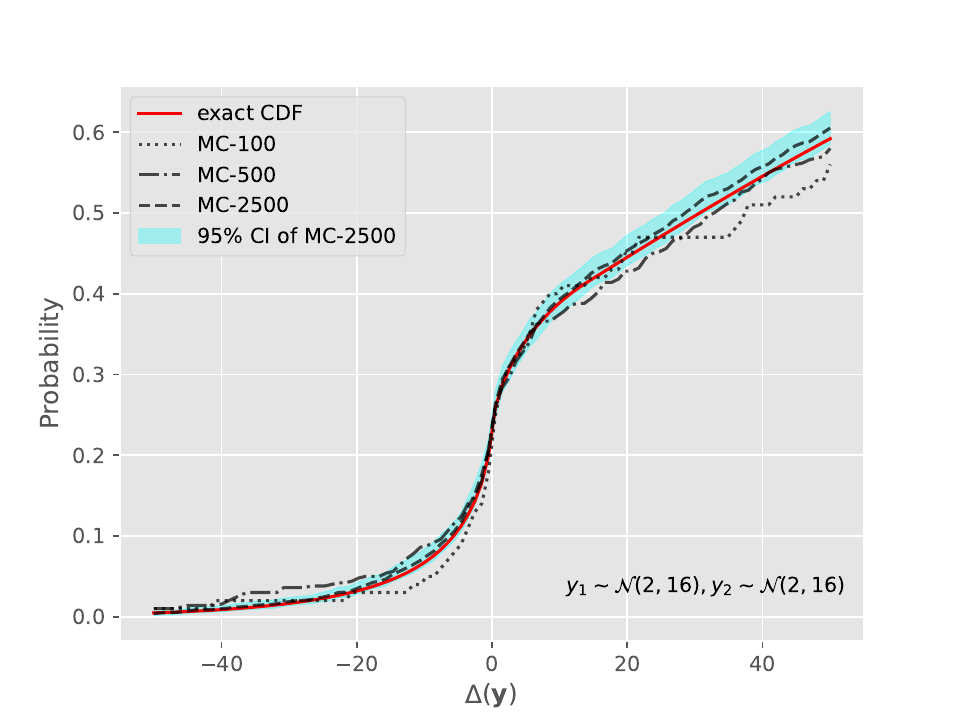}
    \end{minipage}
    \begin{minipage}[c]{0.5\textwidth}
        \includegraphics[width=\textwidth,trim=3mm 2mm 15mm 10mm,clip]{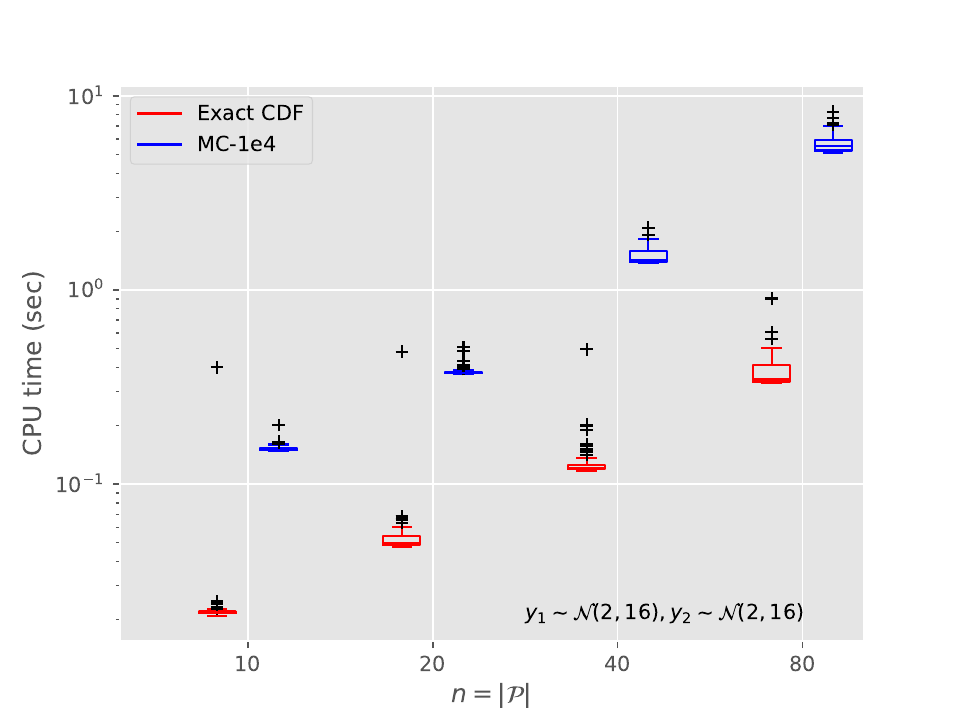}
    \end{minipage}
\caption{\label{fig:example}\textbf{Left}: For the Pareto front in Fig.~\ref{fig:2dhv}, we show the CDF of $\BF{y}$ computed from the exact and MC methods (using 100, 500, and 2\,500 samples). \textbf{Right}: The CPU time for the exact and the MC method (with $10^4$ sample points) w.r.t.~an increasing number of points of \pfa.}
\end{figure*}

\paragraph{Numerical computation}
We use the numerical integration\footnote{We employed the 21-point Gauss–Kronrod quadrature method with maximally $50$ sub-intervals.} to compute the distribution function in each cell and set the absolute error of the integration to $10^{-8}$. The time complexity of the PDF and CDF of HVI is quadratic w.r.t. the number of approximation points in \pfa due to the quadratic number terms in Eq.~\eqref{eq:full-distribution}. To reduce the time complexity, we propose to prune the computation on some cells: (1) the ones on which the probability mass of $\BF{y}$ is sufficiently small. We only include the cells that overlap with the range of $\mu\pm3\sigma$ in the computation; (2) if the value of HVI to compute is out of the range of conditional HVI on a cell. In the left plot of Fig~\ref{fig:example}, we have illustrated an example of the CDF of HVI computed from both the exact distribution and the Monte Carlo (MC) method.
It is necessary to compare the computational cost of the exact distribution to that of the MC method to approximate the cumulative distribution function. Generally, for achieving an accuracy of $\tau$, the numerical integration requires $O(\tau^{-1})$~\cite{Novak14}, resulting in an overall complexity of $O(\tau^{-1}(n+1)^2)$ for the exact method. In contrast, the MC method requires sampling $O(\tau^{-2})$ realizations of $\BF{y}$ and calculating the hypervolume improvement thereof\footnote{It requires computing the hypervolume of the Pareto front approximation, which has a time complexity of $\Theta(n\log n)$ when $m=2,3$~\cite{BeumeFLPV09}.}, giving rise to the complexity of $O(\tau^{-2}n\ln n)$. Also, in the right plot of Fig.~\ref{fig:example}, we compare the CPU time of the exact expression to the MC computation when varying the number of points in the Pareto front, which shows that under a comparable numerical accuracy, the CPU time consumed by the exact computation is roughly one order of magnitude lower than that of the MC method, for a wide range of the cardinality of \pfa.

\section{$\varepsilon$-Probability of Hypervolume Improvement} \label{sec:acquisition}
In this section, we leverage the commonly used $\varepsilon$-PoI function with HVI's distribution in order to demonstrate the usefulness thereof in Bayesian optimization. 
The $\varepsilon$-PoI function translates the mean of posterior distribution towards \pfa by $\varepsilon\BF{1}_m$ and computes the probability of improving the current \pfa (see its definition in Sec.~\ref{sec:background}).
When computing $\varepsilon$-PoI, all objective points that are taken into account have a minimal distance of $\varepsilon$ to the attainment boundary of \pfa. Despite its simplicity, we find it difficult to relate $\varepsilon$ to the quantiles of HVI's distribution: from the posterior distribution, we can generate two sample points that both have $\varepsilon$ minimal distance to the attainment boundary but differ hugely in their hypervolume improvements. 
\begin{figure*}[t]
    \centering
    \includegraphics[width=\textwidth, trim=4mm 0mm 0mm 0mm, clip]{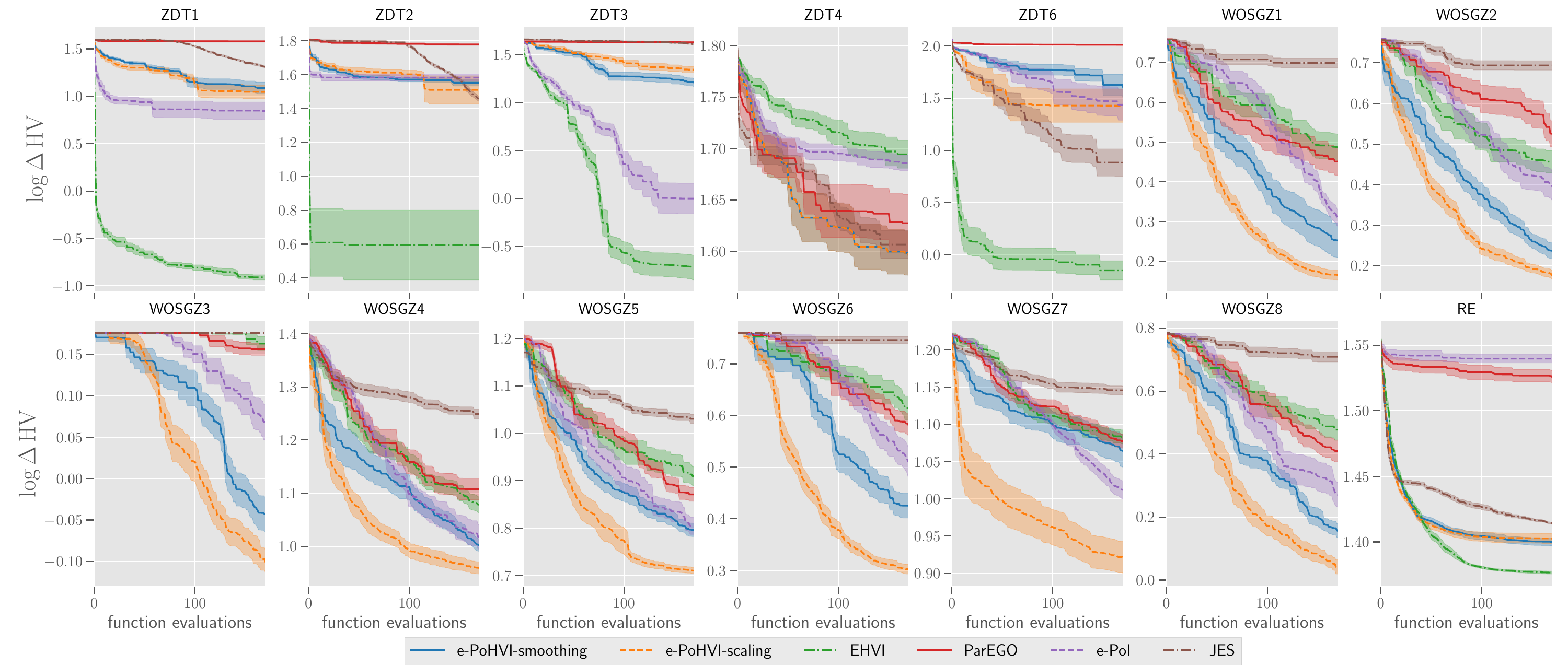}
    \caption{The log difference between the hypervolume of the best-so-far approximation set \pfa and the target hypervolume over function evaluations. The target hypervolume is obtained with 1\,000 points evenly sampled on the Pareto front of each problem. We show the mean and standard error of the log differences measured from $15$ independent runs of each acquisition function on each problem.}
    \label{fig:convergence}
\end{figure*}

To mitigate this issue, we propose the \emph{$\varepsilon$-Probability of Hypervolume Improvement} ($\varepsilon$-PoHVI) function, which computes the probability of making at least $\varepsilon$ hypervolume improvement to \pfa. This probability can be computed directly by HVI's CDF function defined in Eq.~\eqref{eq:conditional-cdf}:
\begin{equation}
    \operatorname{\varepsilon-PoHVI}(\BF{x};\pfa, \BF{r},\varepsilon) = 1-\CDF_{\hvi(\BF{y})\mid \E,\BF{x}}\left(\varepsilon\right),
\end{equation}
where $\varepsilon$ can be considered a lower quantile of HVI's distribution. This definition gives rise to two advantages: (1) When HVI exhibits a huge dispersion, $\varepsilon$-PoHVI is still safe to use, compared to EHVI, which would become less meaningful in this circumstance; (2) the user of Bayesian optimization can precisely control the level of the minimal improvement in each iteration. Compared to $\varepsilon$-PoI (time complexity: $\Theta(n\log n)$), our new acquisition function suffers from relatively higher computation overheads (quadratic; see ``Numerical computation'' above), and it requires the user to specify a reference point (for computing the hypervolume improvement), which is not needed in $\varepsilon$-PoI. Nevertheless, we are interested in whether the losses in the computation cost of $\varepsilon$-PoHVI can speed up the empirical convergence of Bayesian optimization on challenging problems. In practice, the free parameter $\varepsilon$ in those two acquisition functions is either manually determined or tuned via hyperparameter tuning. In this paper, we also propose two control schemes:
\begin{itemize}
    \item \emph{$\varepsilon$-PoHVI-scaling} determines the parameter $\varepsilon_{t}$ at iteration $t$ with the schedule: $\varepsilon_{t} = \varepsilon_0 \exp{(-ct)}$,
    where $\varepsilon_0 = 0.05, c=0.02$. We include, in the supplementary material, a rule-of-thumb to set the hyperparameter $\varepsilon_0$ and $c$.
    This schedule is motivated by the fact that when converging to the Pareto front, the steps of each approximation point tend to decrease.
    \item \emph{$\varepsilon$-PoHVI-smoothing} exponentially smooths of the hypervolume improvement measured in the optimization: $\varepsilon_{t+1} = \alpha\left(\hv(\pfa_{t}, \refp) - \hv(\pfa_{t-1},\refp)\right) + (1-\alpha) \varepsilon_{t}$,
    where $\alpha=0.5$ and $\varepsilon_0 = 0.05$. The intuition is to take the hypervolume improvement realized in the optimization history to set the $\varepsilon$ value for the next iteration. 
\end{itemize}

\section{Experiments}\label{sec:experiments}
\begin{table*}[t]
\centering
\caption{Over all test problems, we perform the pairwise Wilcoxon's Rank-Sum test matrix at significance level 0.05 with p-value correction, where $+/\approx/-$ indicates that the algorithm in the column is significantly better/not different/worst than the ones in rows.}
\label{tab:sumRankSum_all}
\begin{tabular}{l|cccc}
\toprule
                           & \textbf{e-PoI} & \textbf{EHVI} & \textbf{e-PoHVI-smoothing} & \textbf{e-PoHVI-scaling} \\
\hline
\textbf{e-PoI}             & n.a.         & 5/2/7         & 6/6/2                      & 9/3/2                    \\
\textbf{EHVI}              & 7/2/5          & n.a.         & 7/2/5                      & 8/1/5                    \\
\textbf{e-PoHVI-smoothing} & 2/6/6          & 5/2/7         & n.a.                     & 8/5/1                    \\
\textbf{e-PoHVI-scaling}   & 2/3/9          & 5/1/8         & 1/6/7                      & n.a.                   \\
\hline
Sum of $+/\approx/-$                  & 11/11/20       & 15/5/22       & 14/14/14                   & \textbf{25/9/8}    \\
\bottomrule
\end{tabular}
\end{table*}

\paragraph{Experimental setup} 
We investigate the empirical performance of $\varepsilon$-PoHVI against $\varepsilon$-PoI and EHVI on three sets of test problems: (1) the classical bi-objective ZDT problems~\cite{DBLP:journals/ec/ZitzlerDT00}, which have regular-shaped Pareto front (either convex or concave). We selected problems ZDT1-4 and 6 (ZDT5 is a discrete optimization problem); (2) WOSGZ1-8 problems~\cite{wosgz} whose Pareto fronts are more difficult to approximate (WOSGZ9-16 are tri-objective problems); (3) a real-world problem - \emph{four bar truss design}~\cite{re1,IshibuchiREproblems2020} (denoted as RE), which has a convex Pareto front and the ranges of two objective functions differ drastically. The decision space is set to $[0, 1]^{30}$ for ZDT1-3 and ZDT6, $[0,1] \times [-5,5]^{29}$ for ZDT4, $[0,1]\times[-1, 1]^{29}$ for all WOSGZ problems, and $[1, 3] \times [1, \sqrt{2}] \times [1, \sqrt{2}] \times [1, 3]$ for RE. In the objective space, we take the reference point $\mathbf{r}=[15,15]$ as recommended in~\cite{DBLP:journals/ec/ZitzlerDT00} for ZDT problems when computing the hypervolume. The reference points are set to $[1.2, 1.2]$ and $[3\,000,0.0383]$ for the WOSGZ and the RE problem, respectively, as suggested in~\cite{wosgz,dgemo}.\\
We implement $\varepsilon$-PoI, $\varepsilon$-PoHVI, EHVI in the DGEMO~\cite{dgemo} algorithmic framework\footnote{Our source code is available at \url{https://github.com/wangronin/HVI-distribution}} and test each of acquisition functions with $15$ independent runs on each test problem. Also, we compare PoHVI with two commonly-used multi-objective Bayesian optimization algorithms: Joint Entropy Search (JES)~\cite{TuGKS22} and ParEGO~\cite{Knowles06}. \\
We initialize the BO algorithm with $\min(60, 6d)$ points generated with Latin Hypercube sampling and terminate the algorithm at $170$ iterations. We build two independent Gaussian processes for each objective with Mat\'ern 5/2 kernel. We maximize the acquisition function in each iteration with covariance matrix adaptation evolution strategy (CMA-ES) algorithm~\cite{Hansen06}. Also, we take the above ``scaling'' control scheme to set the free parameter of $\varepsilon$-PoI.

\paragraph{Results and discussion} 
In Fig.~\ref{fig:convergence}, we show the mean convergence and 95\% confidence interval of the best-so-far hypervolume value of \pfa for BO equipped with different bi-objective acquisition functions. Overall, we see (1) two versions of $\varepsilon$-PoHVI outperform $\varepsilon$-PoI and EHVI substantially on WOSGZ problems while EHVI takes the lead on ZDT and RE problems; (2) $\varepsilon$-PoHVI outperforms the classic ParEGO algorithm across all functions; (3) the Joint Entropy search (JES) only outperforms PoHVI on ZDT6. \\
Compared to WOSGZ, ZDT problems are considered relatively easier to solve since (1) the Pareto fronts are very regular - either convex or concave; (2) there are no local Pareto fronts (except ZDT4); (3) on the Pareto front, the optimal distribution~\cite{AugerBB10} of points (w.r.t.~HV) is mostly uniform. Similarly, the real-world problem RE has a much lower search dimensionality (four) with a convex Pareto front. WOSGZ problems are, however, designed to have more realistic Pareto fronts (non-convex/non-concave with non-uniform optimal distribution), which is shown to be challenging to solve or model~\cite{wosgz}. \\
With a small data set, we expect a much higher GP's prediction uncertainty on WOSGZ problems than ZDTs, which is validated with numerical observations: in Table~\ref{tab:gp_sigma_zdt_wosgz}, we list GP's prediction uncertainty (posterior standard deviations) for each objective function over decision points sampled u.a.r.~from the decision space. We see, for instance, on WOSGZ1, the posterior standard deviation of the first objective is about $\sigma_1 = 1.5524$, which is orders of magnitude higher than that on ZDT1 ($\sigma_1 =0.00837$).
As a result, HVI's distribution on WOSGZs exhibits a high dispersion, and the EHVI function becomes less characteristic of the distribution. In contrast, $\varepsilon$-PoHVI is not affected by the large dispersion since it utilizes the quantile of HVI's distribution. Similarly, $\varepsilon$-PoI also suffers less from the high dispersion of HVI despite not connecting to the quantiles of HVI's distribution. \emph{We conclude that $\varepsilon$-PoHVI is advantageous when the prediction uncertainty of GP is high, which occurs if the objective functions are challenging to model with a small data set.} 

Between $\varepsilon$-PoI and $\varepsilon$-PoHVI, we see that $\varepsilon$-PoHVI (with both control schemes of $\varepsilon$) substantially improves the convergence speed and achieves better hypervolume values of the final approximation set \pfa on all test problems except ZDT1, 2, and 4. ZDT1 is one of the simplest bi-objective problems. On ZDT2, although $\varepsilon$-PoI shows a faster initial convergence, $\varepsilon$-PoHVI manages to hit about the same mean hypervolume value in the last few iterations. 
On ZDT4, all acquisition functions fail to get close to the Pareto front since this problem has many local Pareto fronts. Much more function evaluations are needed for all acquisition functions. Also, we see that the scaling control ($\varepsilon$-PoHVI-scaling) performs better than the exponential smoothing ($\varepsilon$-PoHVI-smoothing) on all problems except ZDT3. \\
Furthermore, we perform a pairwise Wilcoxon's Rank-Sum test (with $p$-value correction for multiple testing) on all test problems to verify the statistical significance of the result in the convergence chart. In table~\ref{tab:sumRankSum_all}, we show the test's outcome - each entry of the table ($+/\approx/-$) indicates out of 14 test problems, the number of cases where the algorithm in the first row is significantly better/not different/worst than the ones in the first column. Overall, $\varepsilon$-PoHVI-scaling outperforms both $\varepsilon$-PoI, $\varepsilon$-PoHVI-smoothing, and EHVI. Since $\varepsilon$-PoHVI-scaling is a comparable parameter control compared to that of $\varepsilon$-PoI. We can conclude that $\varepsilon$-PoHVI has better empirical performance than $\varepsilon$-PoI on the selected test problems. Finally, in the supplementary material, we have included the detailed descriptive statistics of the convergence of hypervolume values and the best, median, and worst $\pfa$ obtained by each method at the last iteration.

\section{Conclusions} \label{sec:conclusion}
We first propose a generalization to hypervolume improvement (HVI), which assigns nonzero value to the dominated points. Then, we derive the exact expression of the distribution functions of the hypervolume improvement (HVI) in the Bayesian optimization setup, in which we utilize a cell partition of the objective space.
Compared to the Monte Carlo approach, the numerical computation of the exact expression is computationally faster and numerically more accurate. Taking this distribution function, we propose a novel acquisition function, $\varepsilon$-Probability of Hypervolume Improvement, which shows a large empirical advantage over its counterparts. 

The limitation of this work is two-fold: (1) the distribution functions are derived in the bi-objective scenario with uncorrelated Gaussian processes for each objective. In practice, users of Bayesian optimization often wish to solve more than two objectives with a multi-output Gaussian process model. For correlated Gaussian posteriors, our analytical results (Eq.~\eqref{eq:conditional-pdf} and~\eqref{eq:conditional-cdf}) can be directly applied by substituting the probability density in Eq.~\eqref{eq:conditional-pdf} with the one of the correlated Gaussian. As for more than two objectives, the cell partition approach can be applied in principle. However, the exact formulation of HVI's distribution is very complicated to accommodate in this work, given the page limit.  
(2) the experimental study of the proposed acquisition function can be enhanced on more real-world problems, e.g., industrial optimization and hyper-parameter tuning tasks.

\section*{Acknowledgement}
The authors would like to
thank Michael Emmerich for various discussions and suggestions.
This work is supported by the Austrian Science Fund (FWF – Der Wissenschaftsfonds) under the project (I 5315, `ML Methods for Feature Identification Global Optimization).

\bibliography{references.bib}
\bibliographystyle{icml2024}

\newpage
\appendix
\onecolumn
\section{Appendix}
We illustrate how to determine the hyperparameters of $\varepsilon$-PoHVI-scaling: $\varepsilon_{t} = \varepsilon_0\exp{(-ct)}$.
We use the following reasoning to determine the hyperparameters in the scaling scheme. For example, on WOSGZ problems: (1) $\varepsilon_0$ controls the maximal HV improvement across all iterations; (2) we take $[1.2, 1.2]$ as the reference point, which gives rise to an upper bound of $1.44$ on the maximal HV value, provided that the ideal point on those problems is $[0, 0]$; (3) We assume that the initial HV value after random sampling is one-half the maximal HV value, i.e., $1.44 / 2$; (4) If the BO algorithm were to realize $\varepsilon_0 \exp{(-ct)}$ HV improvement in each iteration, then the total sum of all such improvements should be bounded above the maximal HV value to realize, i.e., $\varepsilon_0\sum_t \exp{(-ct)} \leq 1.44 / 2$. The hyperparameter of $\varepsilon$-PoHVI-smoothing function can be determined in a similar way.
\begin{figure}[h!]
    \centering
    \caption{On each test problem, we show some descriptive statistics: min, max, mean, median, standard deviation (std), and 25\%- and 75\%-quantiles of the hypervolume (HV) value observed at the last iteration of the BO algorithm. The entries are color-coded relative to the corresponding ones (e.g., we take all the mean values in a column) in the same column/problem, where a more greenish color indicates better performance and vice versa.}
    \includegraphics[width=\textwidth,trim=1.9cm 4cm 4cm 1.9cm,clip]{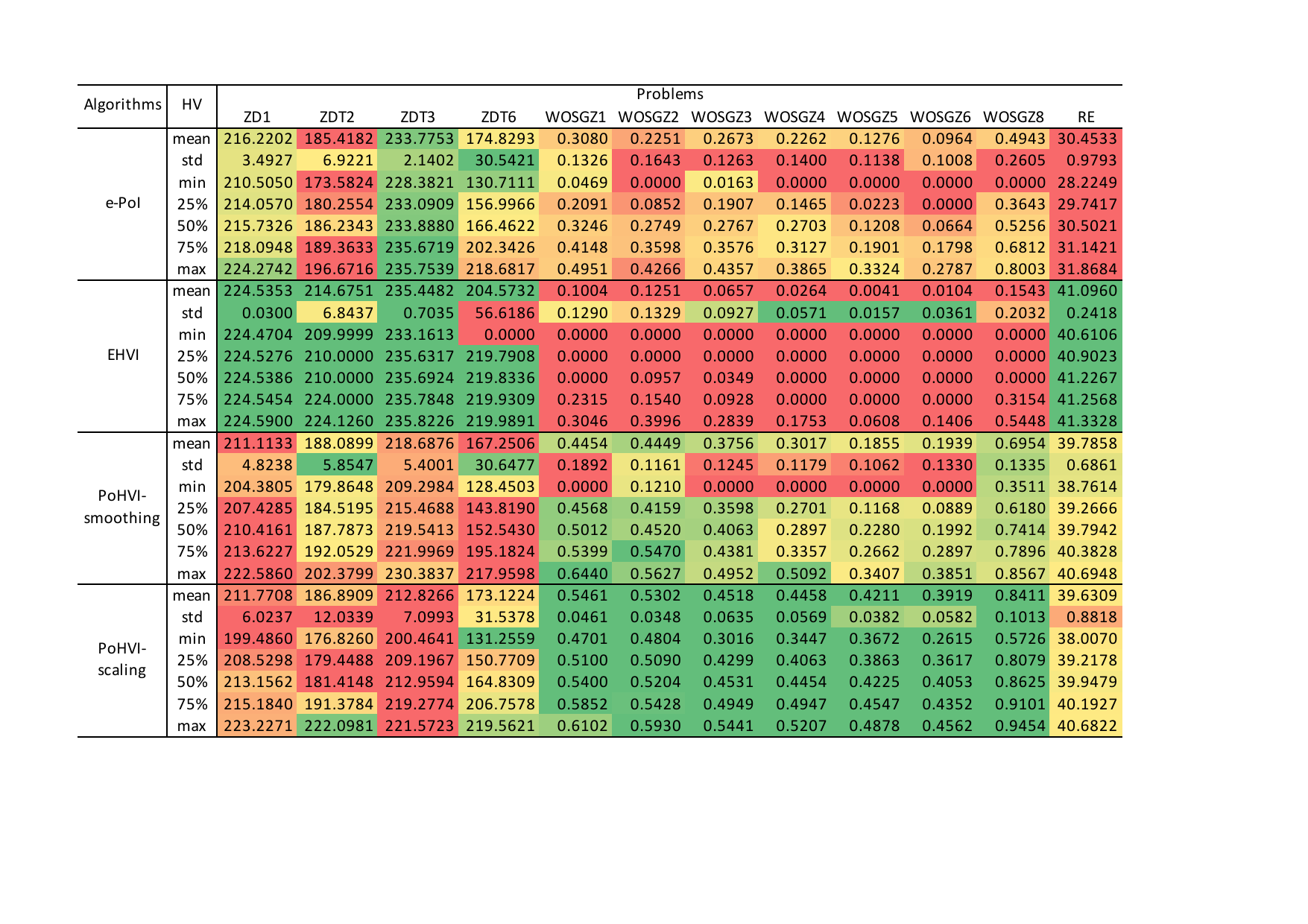}
    \label{fig:my_label}
\end{figure}
\vspace{-20pt}
\begin{table}[!h]
\centering
\caption{
GP's prediction uncertainty (posterior standard deviations $\sigma_1$ and $\sigma_2$ for each objective function, respectively) aggregated over uniformly sampled decision points on ZDTs (top) and WOSGZs (bottom) and BO's iterations. The average and standard error of the uncertainty is estimated over 15 repetitions of BO.}
\label{tab:gp_sigma_zdt_wosgz}
\resizebox{\columnwidth}{!}{%
\begin{tabular}{l|cc|cc|cc|cc|cc}
\toprule
     & \multicolumn{2}{c|}{ZDT1} & \multicolumn{2}{c|}{ZDT2} & \multicolumn{2}{c|}{ZDT3} & \multicolumn{2}{c|}{ZDT4} & \multicolumn{2}{c}{ZDT6} \\
     & $\sigma_1$  & $\sigma_2$ & $\sigma_1$   & $\sigma_2$  & $\sigma_1$   & $\sigma_2$  & $\sigma_1$   & $\sigma_2$  & $\sigma_1$   & $\sigma_2$  \\
\midrule
average & 0.008378    & 0.044345   & 0.008819     & 0.008107    & 0.003032     & 0.400788    & 0.023912     & 22.251792   & 0.043914     & 0.173658    \\
standard error  & 0.043334    & 0.015357   & 0.041234     & 0.003639    & 0.004248     & 0.028704    & 0.019340     & 0.8503408   & 0.030066     & 0.199384 \\
\bottomrule
\end{tabular}
}
\resizebox{\columnwidth}{!}{%
\begin{tabular}{l|cc|cc|cc|cc|cc|cc|cc|cc}
\toprule
     & \multicolumn{2}{c|}{WOSGZ1} & \multicolumn{2}{c|}{WOSGZ2} & \multicolumn{2}{c|}{WOSGZ3} & \multicolumn{2}{c|}{WOSGZ4} & \multicolumn{2}{c|}{WOSGZ5} & \multicolumn{2}{c|}{WOSGZ6} & \multicolumn{2}{c|}{WOSGZ7} & \multicolumn{2}{c}{WOSGZ8} \\
     & $\sigma_1$   & $\sigma_2$  & $\sigma_1$   & $\sigma_2$  & $\sigma_1$   & $\sigma_2$  & $\sigma_1$   & $\sigma_2$  & $\sigma_1$   & $\sigma_2$  & $\sigma_1$   & $\sigma_2$  & $\sigma_1$   & $\sigma_2$  & $\sigma_1$   & $\sigma_2$  \\
\midrule
average & 1.5524       & 0.0915      & 3.1203       & 0.1797      & 3.1615       & 0.1834      & 3.9904       & 0.2614      & 4.4331       & 0.3117      & 5.4329       & 0.3487      & 3.5918       & 0.2047      & 3.8567       & 0.2651      \\
standard error  & 0.5048       & 0.0352     & 0.9660       & 0.0654      & 1.0346       & 0.0690      & 1.3334       & 0.1030      & 1.4432       & 0.1061      & 1.7562       & 0.1255      & 0.8340       & 0.0628      & 1.0050       & 0.0861      \\
\bottomrule
\end{tabular}
}
\end{table}

\begin{figure}
     \centering
     \begin{subfigure}[b]{0.85\textwidth}
         \centering
         \includegraphics[width=\textwidth]{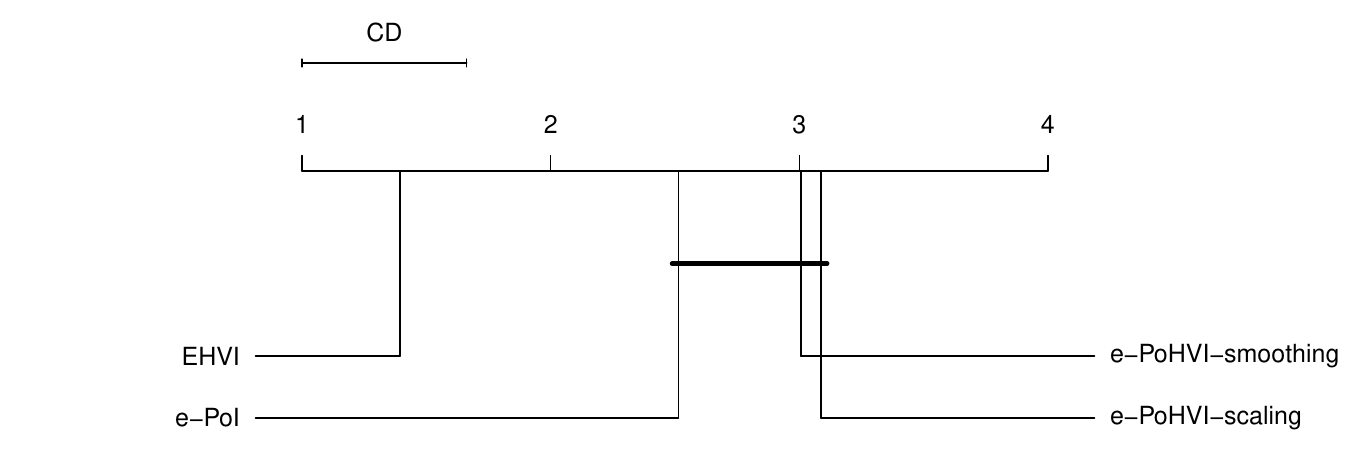}
         \caption{ZDTs}
     \end{subfigure}
     \begin{subfigure}[b]{0.85\textwidth}
         \centering
         \includegraphics[width=\textwidth]{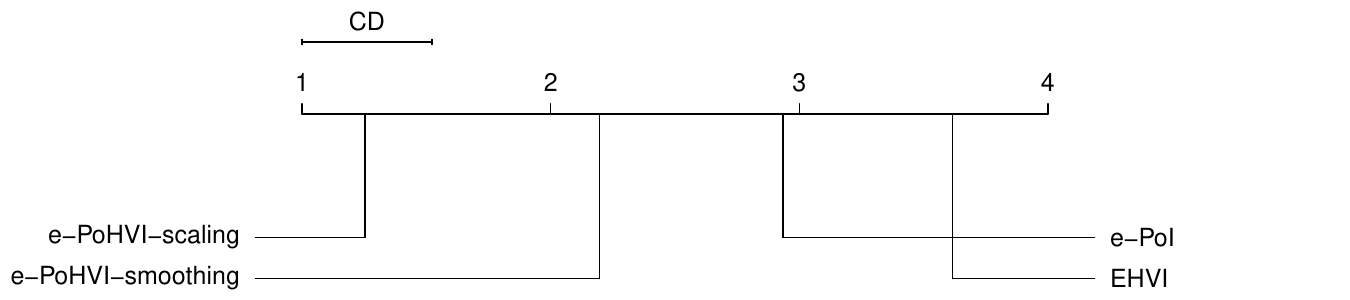}
         \caption{WOSGZs}
     \end{subfigure}
     \begin{subfigure}[b]{0.85\textwidth}
         \centering
         \includegraphics[width=\textwidth]{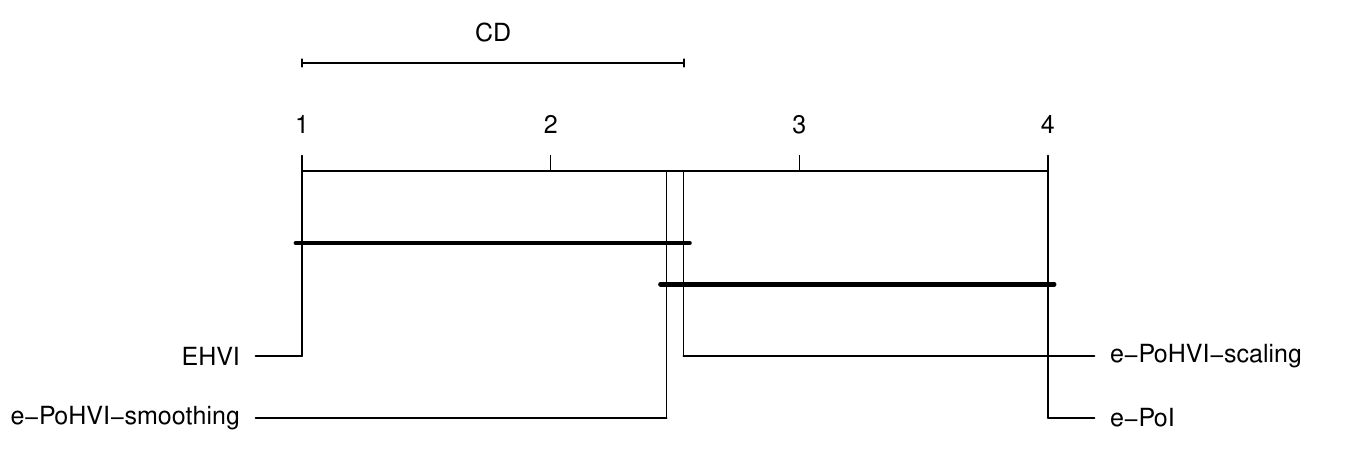}
         \caption{RE}
     \end{subfigure}
     \begin{subfigure}[b]{0.85\textwidth}
         \centering
         \includegraphics[width=\textwidth, trim=20mm 0mm 20mm 0mm,clip]{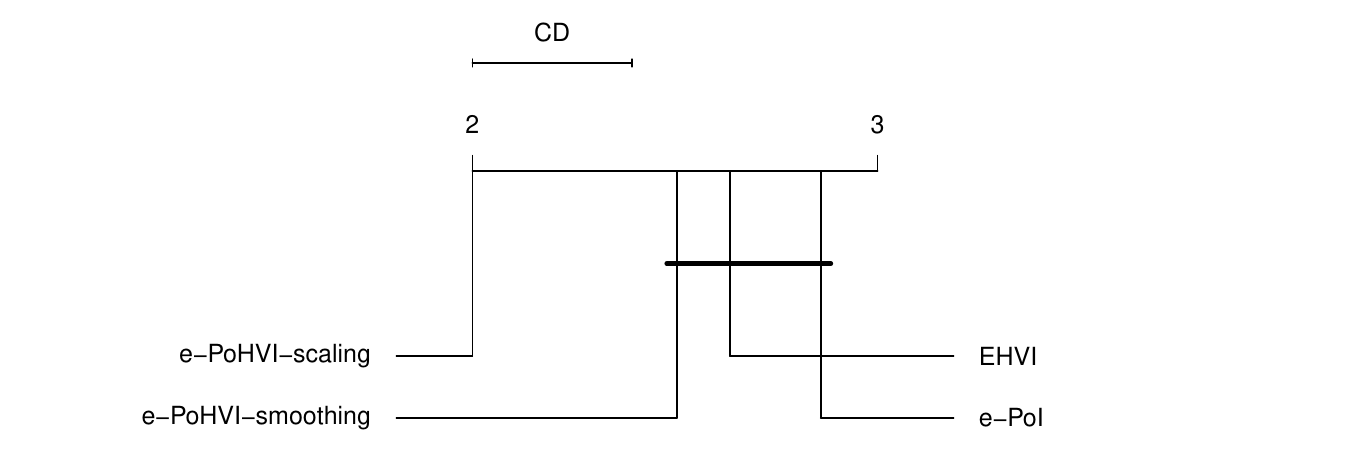}
         \caption{All problems}
     \end{subfigure}
      
    \caption{The critical difference (CD) chart obtained with the Nemenyi posthoc testing procedure to a Friedman test. The performance of two acquisition functions significantly differs on a problem set if their average ranks of HV values differ by at least the critical difference shown as the interval atop each chart. The thick horizontal line indicates a clique of acquisition functions with no significant difference.}
    \label{fig:CD-plots}
\end{figure}

\begin{figure}
     \centering
     \begin{subfigure}[b]{0.8\textwidth}
         \centering
         \includegraphics[width=\textwidth]{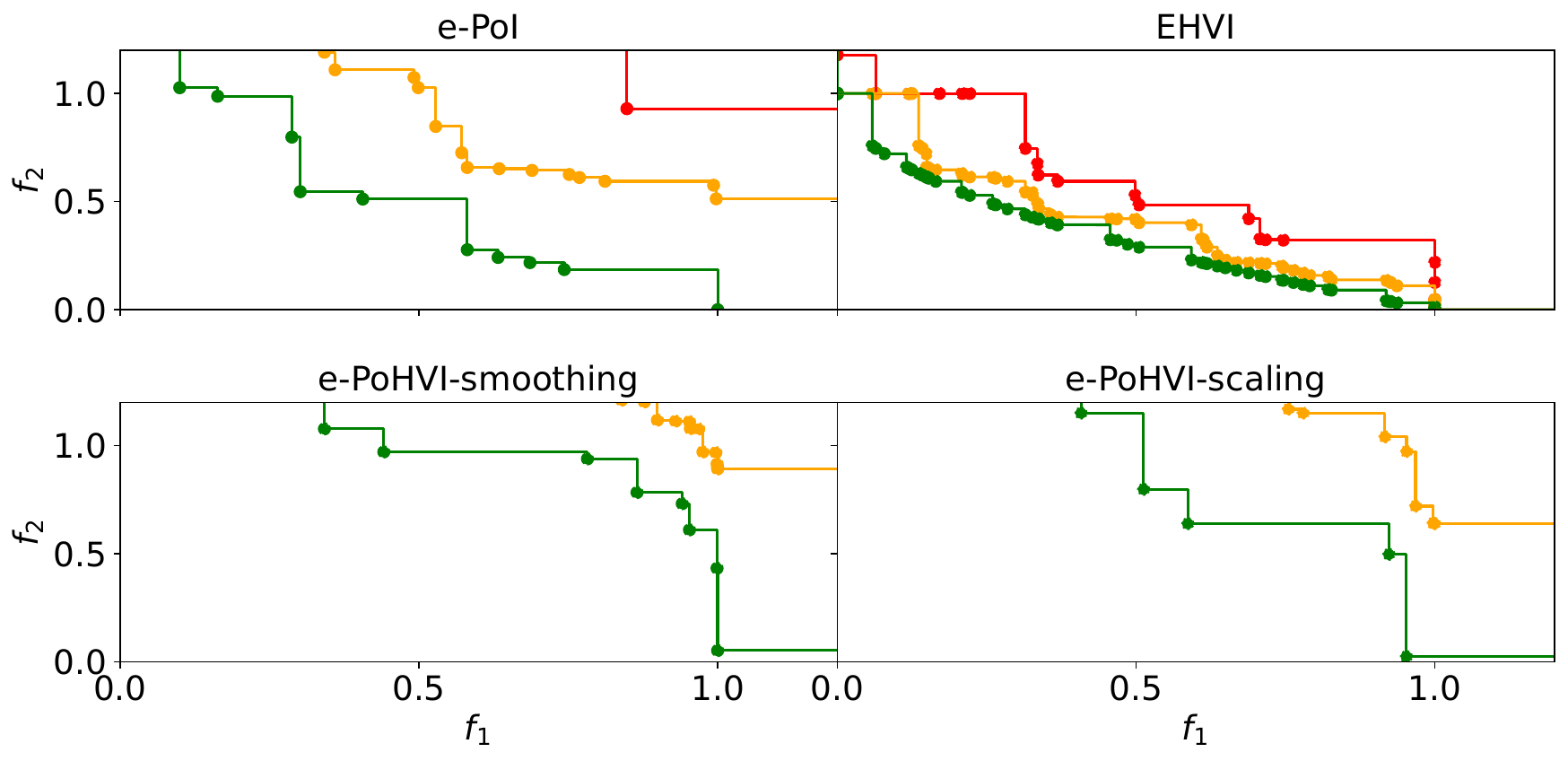}
         \caption{ZDT1}
     \end{subfigure}
     \hfill
     \begin{subfigure}[b]{0.8\textwidth}
         \centering
         \includegraphics[width=\textwidth]{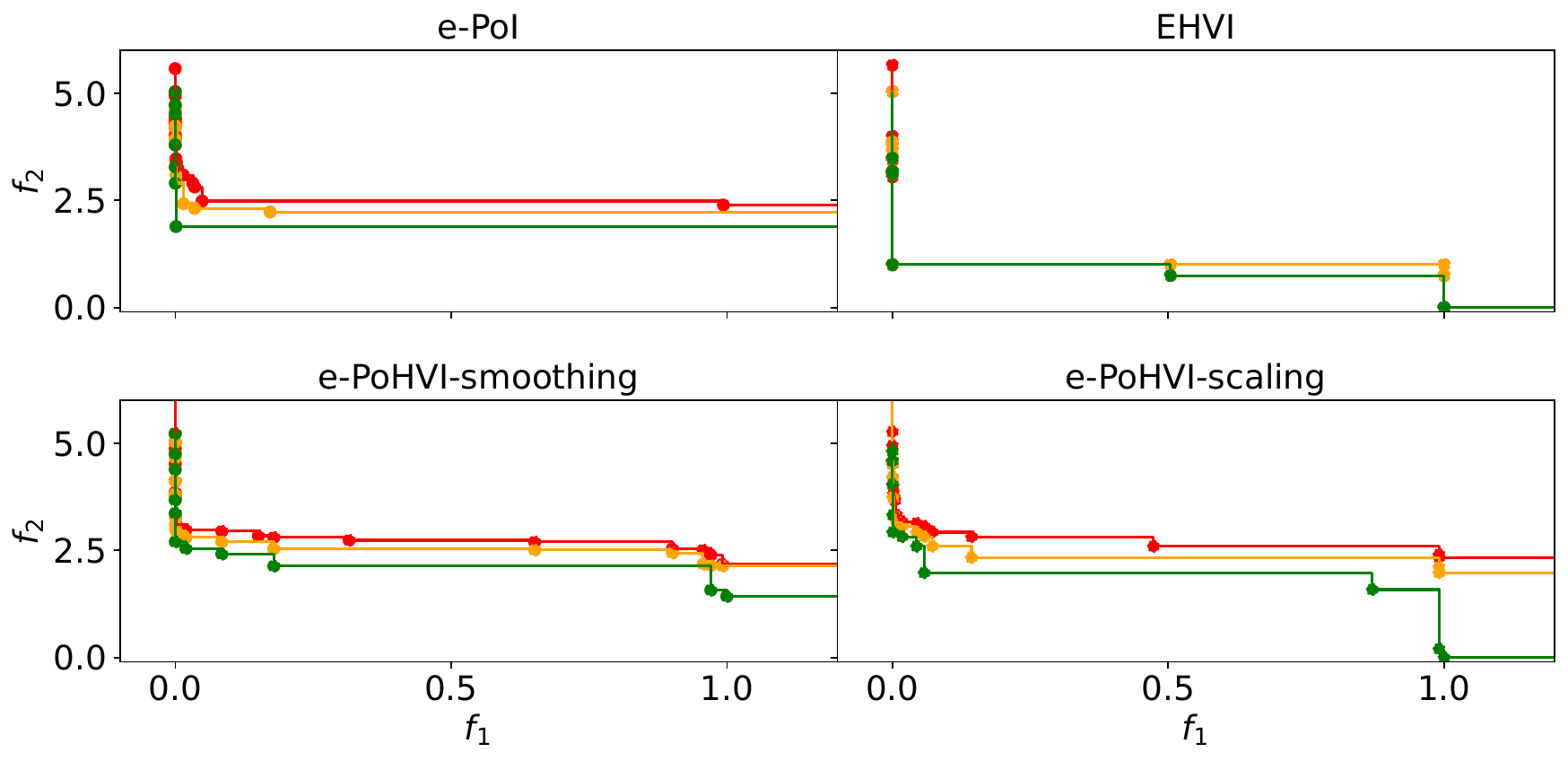}
         \caption{ZDT2}
     \end{subfigure}
      \hfill
     \begin{subfigure}[b]{0.8\textwidth}
         \centering
         \includegraphics[width=\textwidth]{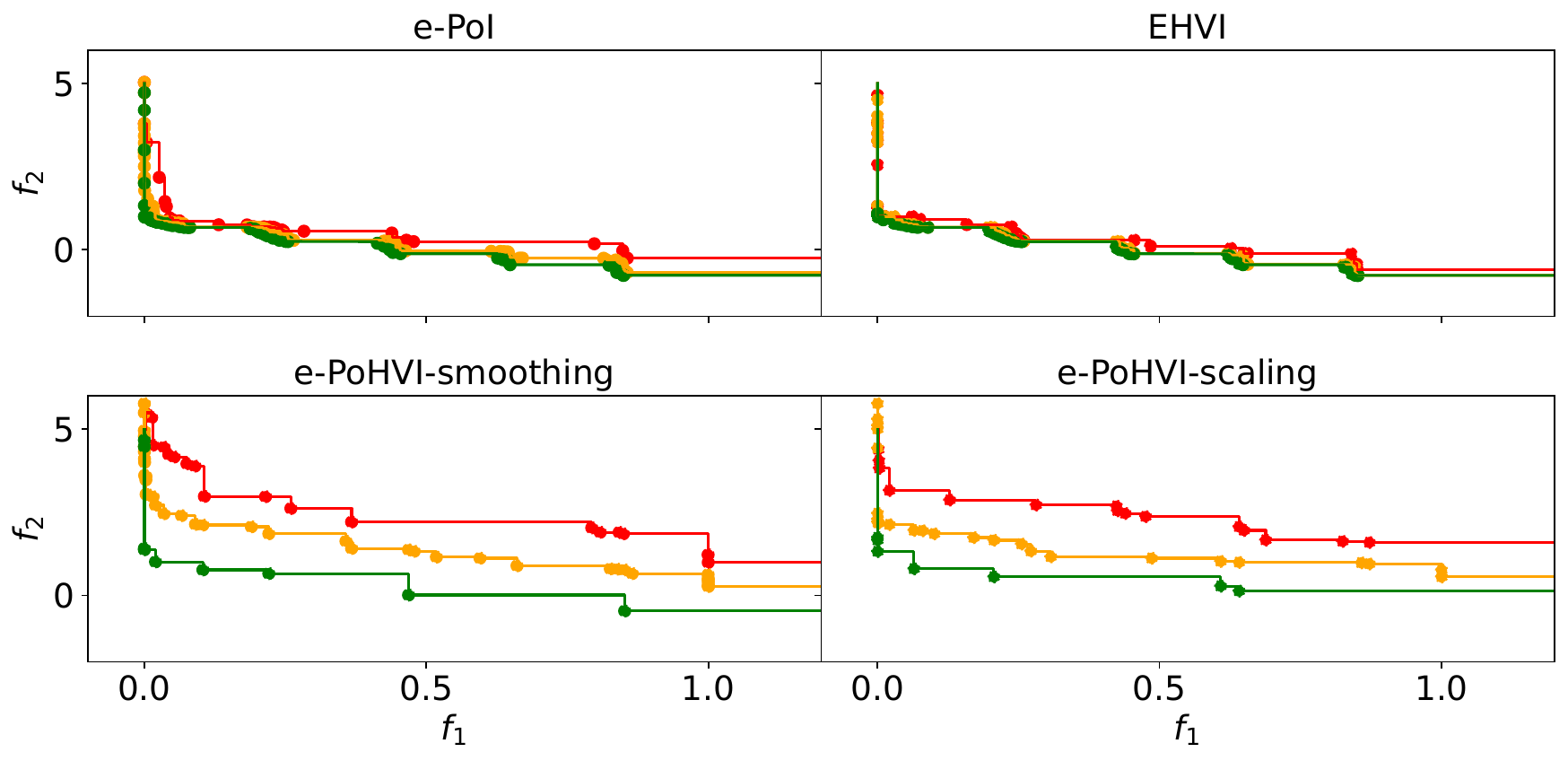}
         \caption{ZDT3}
     \end{subfigure}
      
    \caption{The best, median, and the worst empirical attainment curves on all the test problems, where \protect\greendot, \protect\orangedot, and \protect\reddot \enskip represent the best, the worst, and the median Pareto-front approximation sets over 15 independent runs, respectively.}
    \label{fig:eaf_plots}
\end{figure}

\begin{figure}
\ContinuedFloat
     \centering
     \begin{subfigure}[b]{0.8\textwidth}
         \centering
         \includegraphics[width=\textwidth]{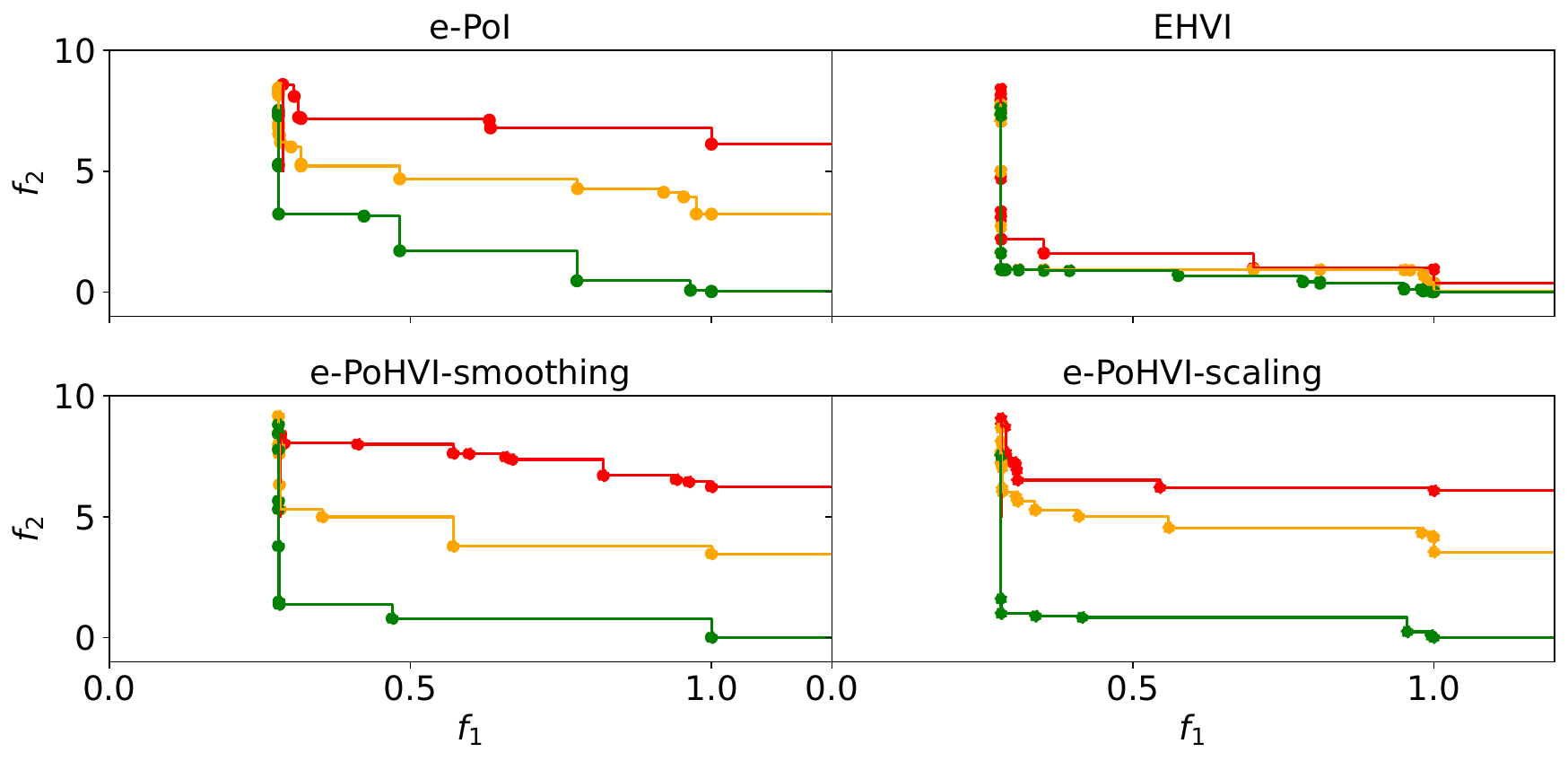}
         \caption{ZDT6}
     \end{subfigure}
     \hfill
      \begin{subfigure}[b]{0.8\textwidth}
         \centering
         \includegraphics[width=\textwidth]{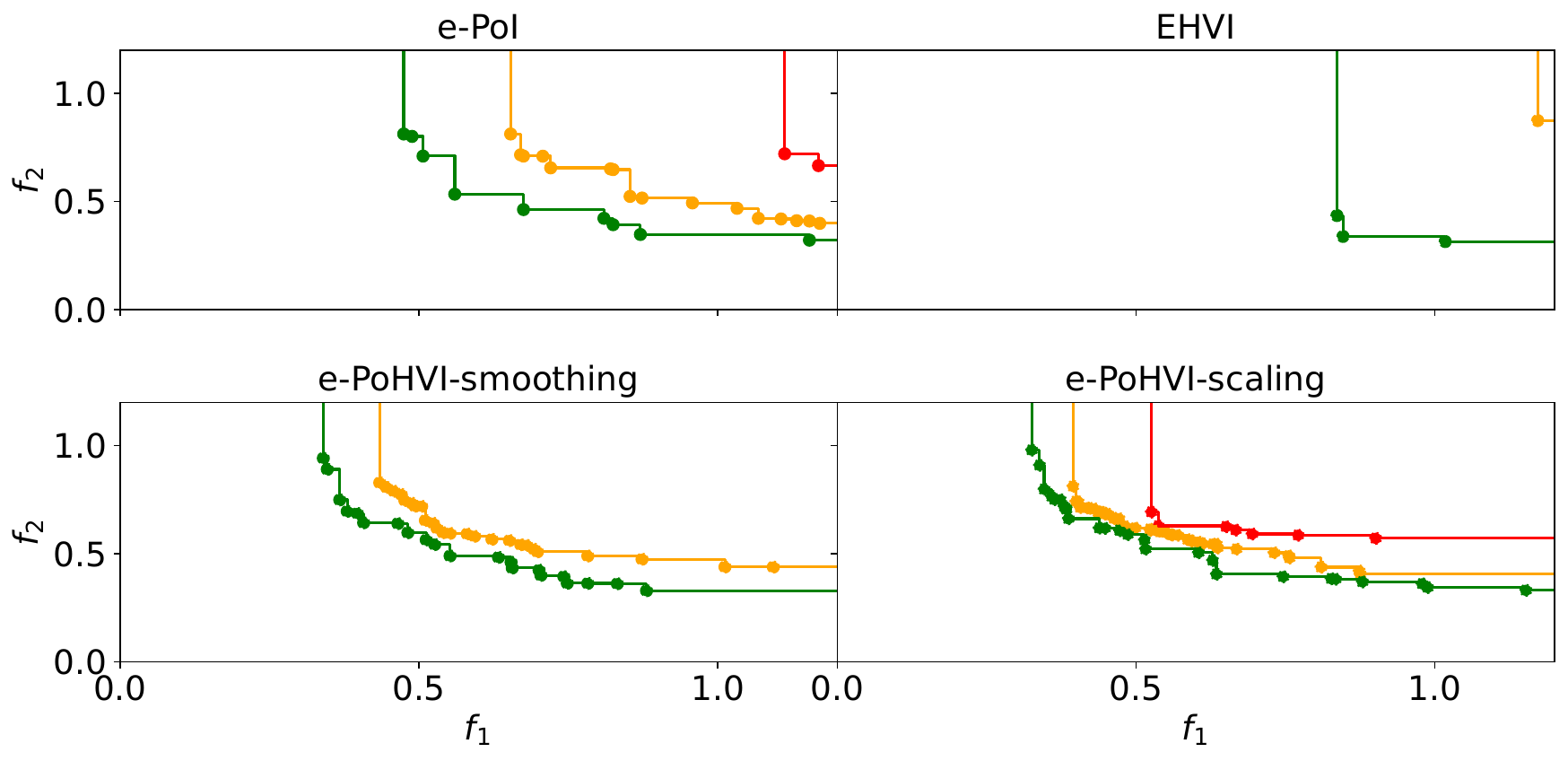}
         \caption{WOSGZ1}
     \end{subfigure}
     \hfill
     \begin{subfigure}[b]{0.8\textwidth}
         \centering
         \includegraphics[width=\textwidth]{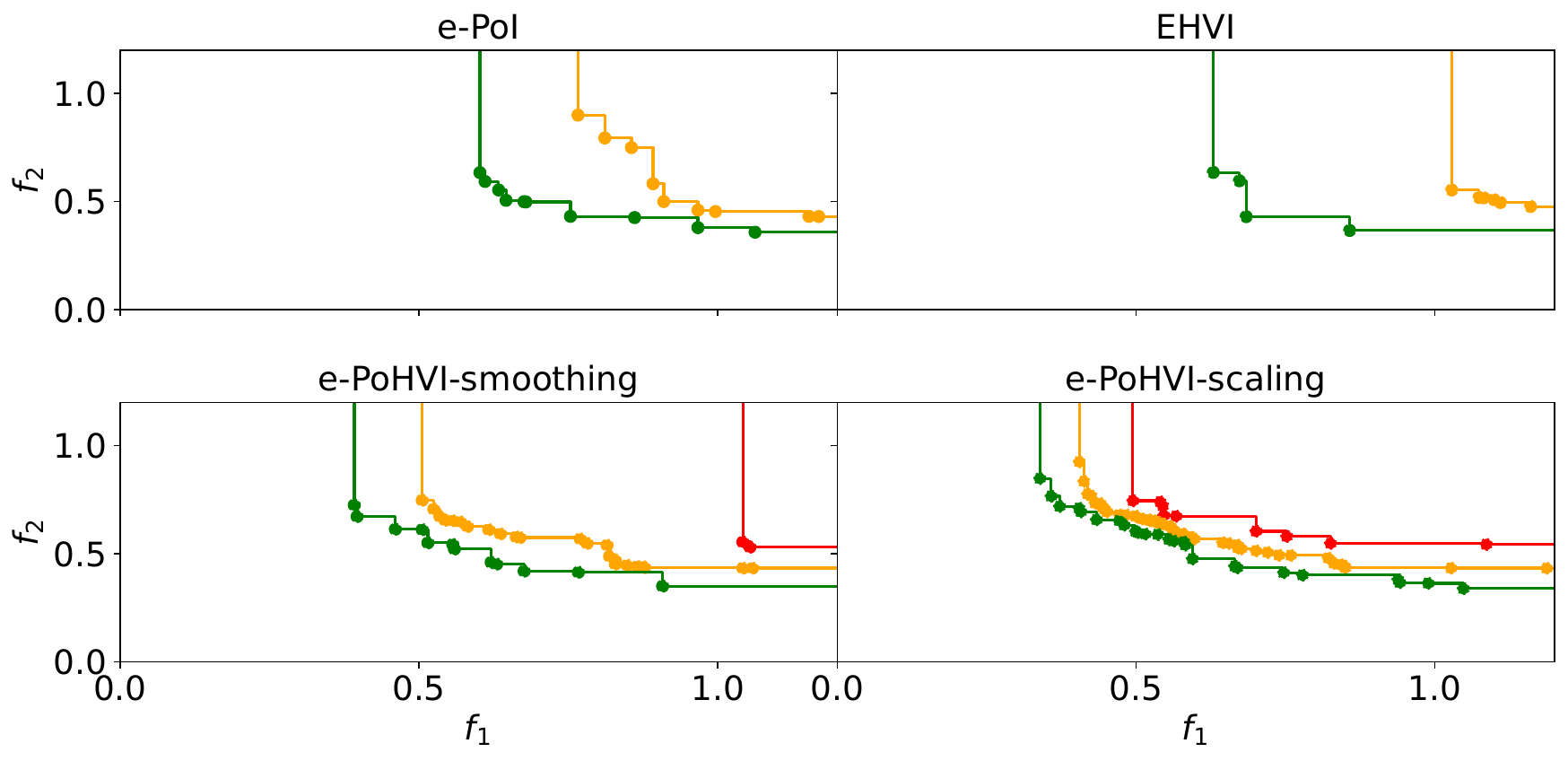}
         \caption{WOSGZ2}
     \end{subfigure}
        \caption{The best, median, and the worst empirical attainment curves on all the test problems, where \protect\greendot, \protect\orangedot, and \protect\reddot \enskip represent the best, the worst, and the median Pareto-front approximation sets over 15 independent runs, respectively.}
    \label{fig:eaf_plots}
\end{figure}

\begin{figure}
\ContinuedFloat
     \centering
  \begin{subfigure}[b]{0.8\textwidth}
         \centering
         \includegraphics[width=\textwidth]{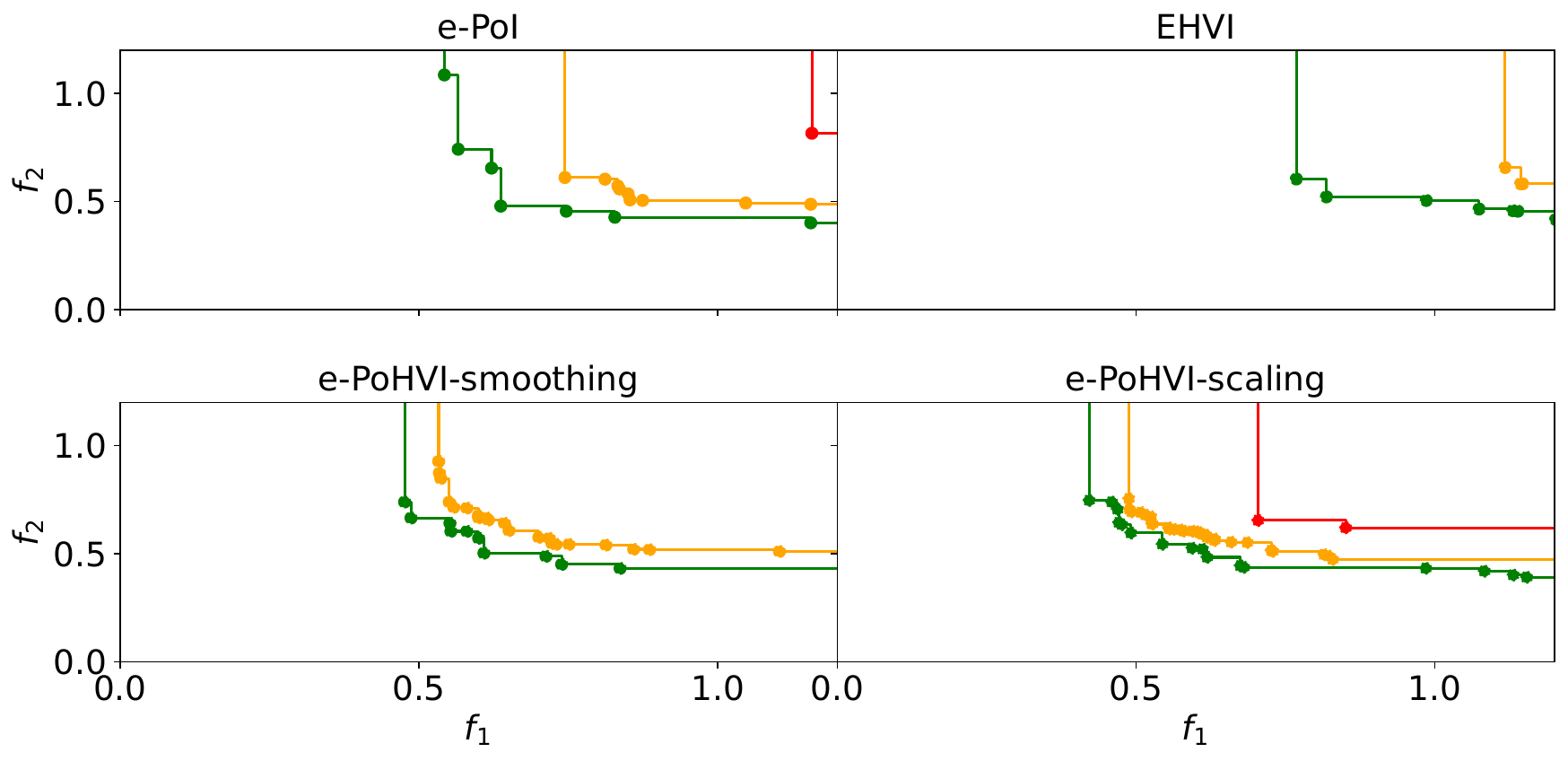}
         \caption{WOSGZ3}
    \end{subfigure}    
    \hfill
     \begin{subfigure}[b]{0.8\textwidth}
         \centering
         \includegraphics[width=\textwidth]{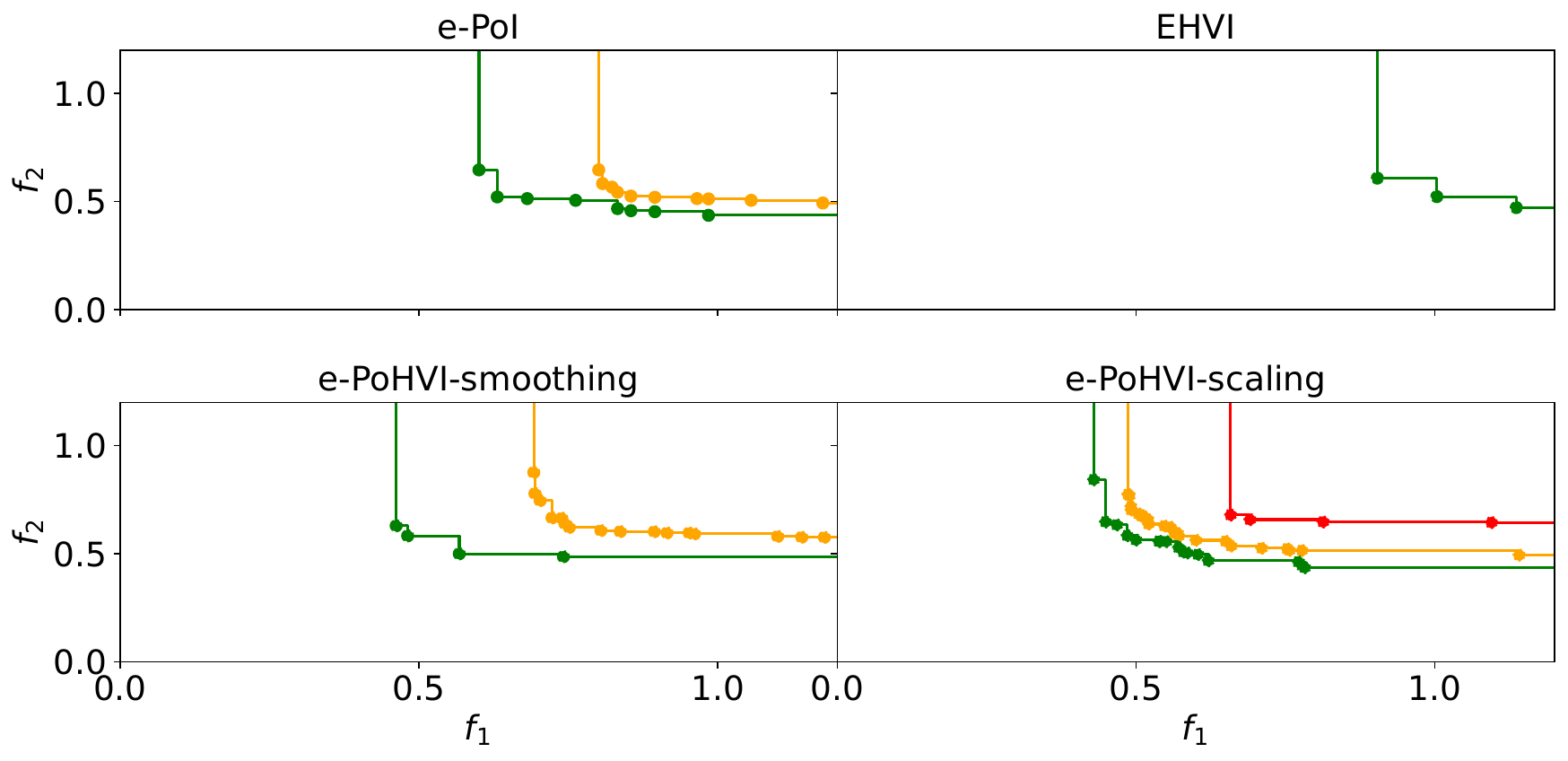}
         \caption{WOSGZ4}
     \end{subfigure}
     \hfill
     \begin{subfigure}[b]{0.8\textwidth}
         \centering
         \includegraphics[width=\textwidth]{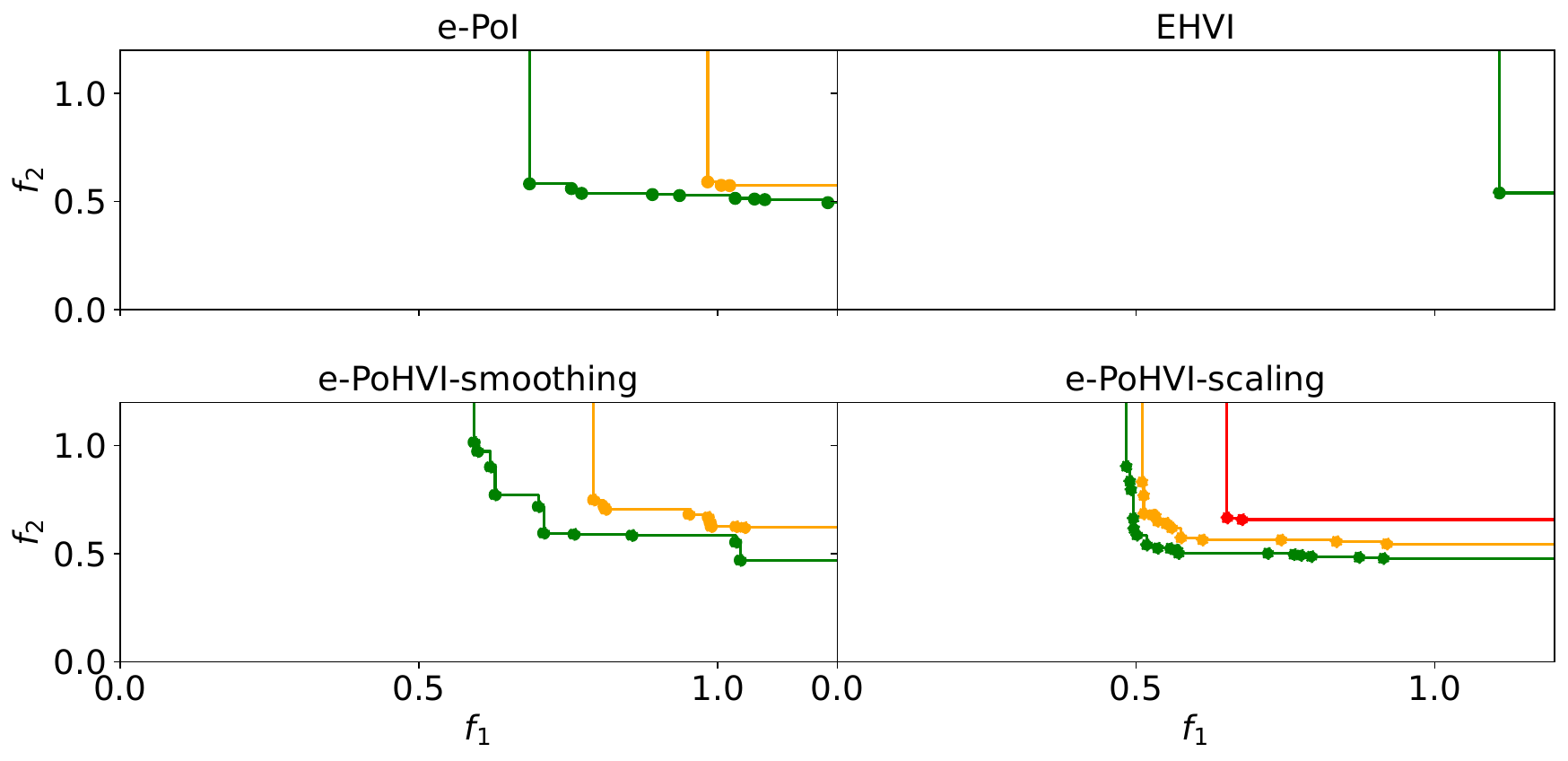}
         \caption{WOSGZ5}
     \end{subfigure}
        \caption{The best, median, and the worst empirical attainment curves on all the test problems, where \protect\greendot, \protect\orangedot, and \protect\reddot \enskip represent the best, the worst, and the median Pareto-front approximation sets over 15 independent runs, respectively.}
    \label{fig:eaf_plots}
\end{figure}

\begin{figure}
\ContinuedFloat
     \centering
     \begin{subfigure}[b]{0.8\textwidth}
         \centering
         \includegraphics[width=\textwidth]{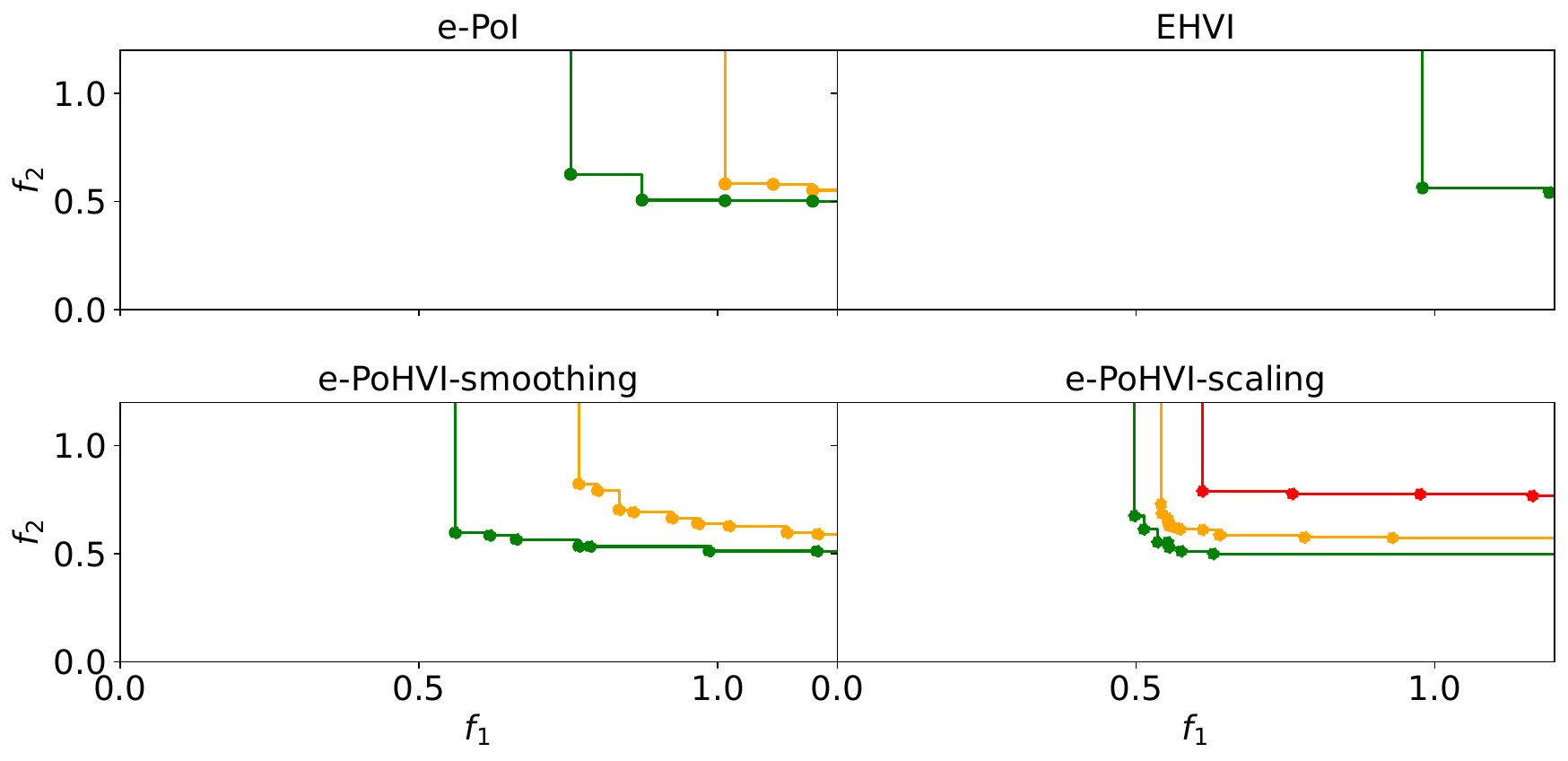}
         \caption{WOSGZ6}
     \end{subfigure}
     \hfill
     \begin{subfigure}[b]{0.8\textwidth}
         \centering
         \includegraphics[width=\textwidth]{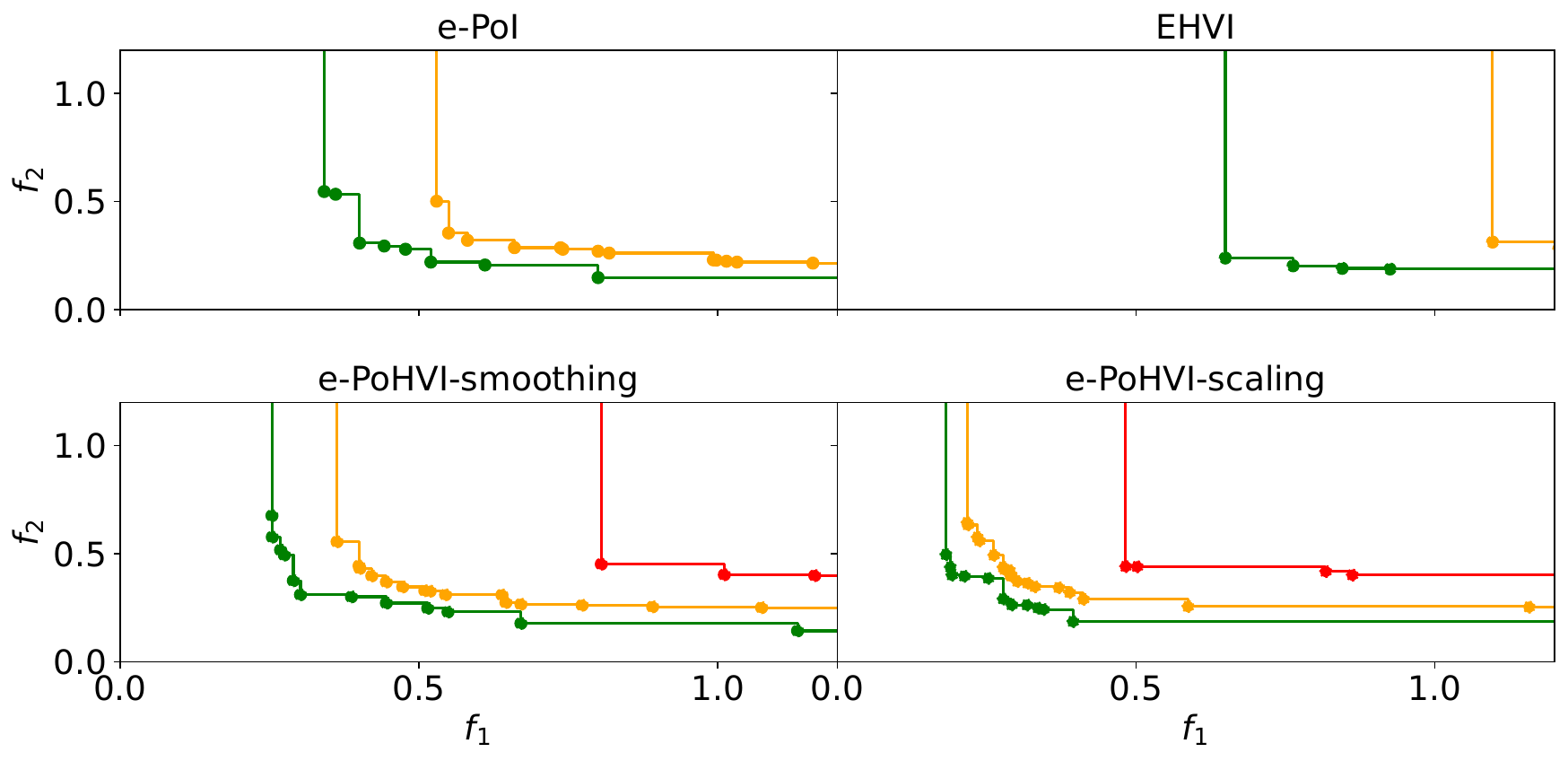}
         \caption{WOSGZ8}
     \end{subfigure}
     \hfill
     \begin{subfigure}[b]{0.8\textwidth}
         \centering
         \includegraphics[width=\textwidth]{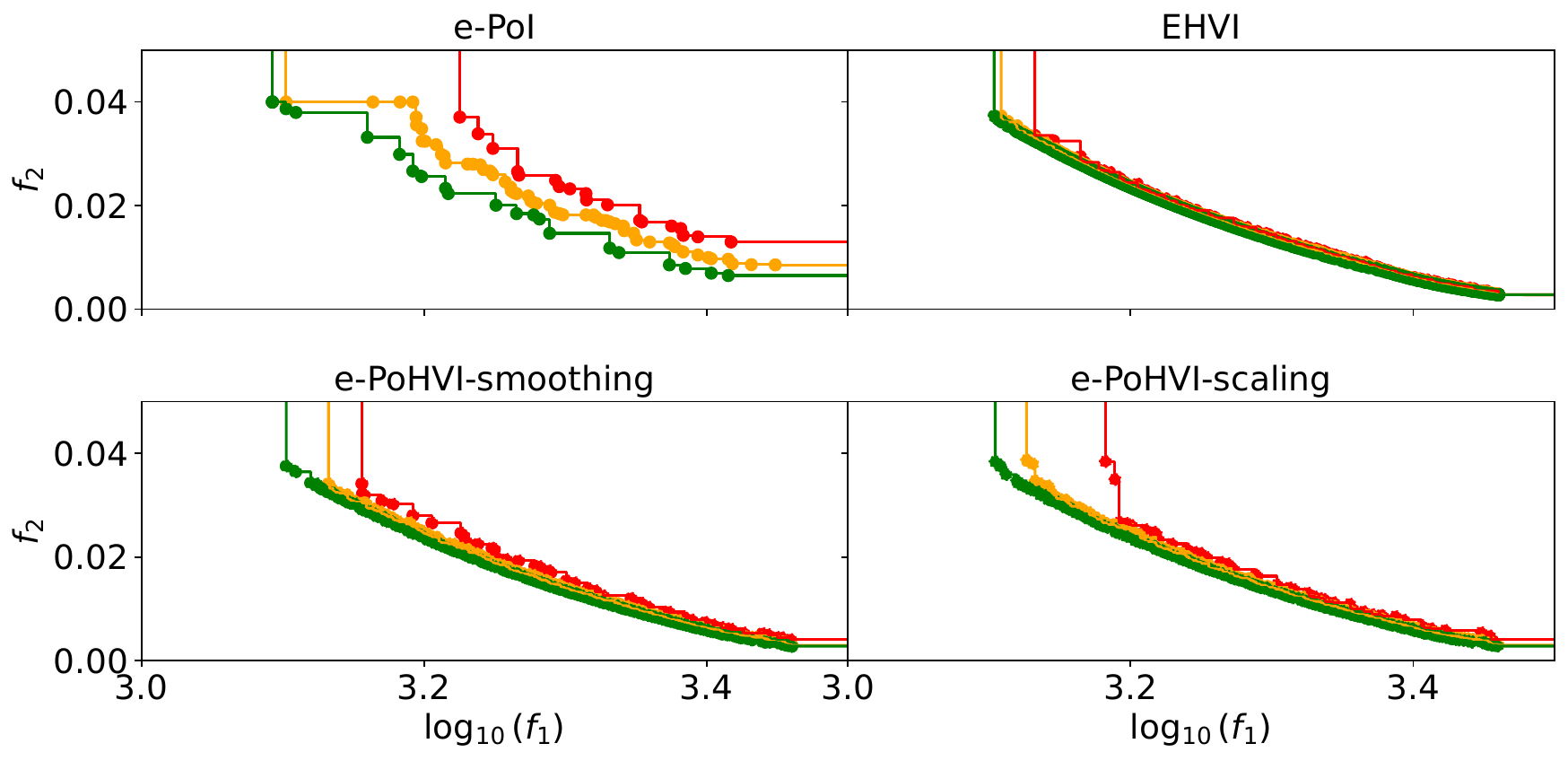}
         \caption{RE}
     \end{subfigure}
        \caption{The best, median, and the worst empirical attainment curves on all the test problems, where \protect\greendot, \protect\orangedot, and \protect\reddot \enskip represent the best, the worst, and the median Pareto-front approximation sets over 15 independent runs, respectively.}
    \label{fig:eaf_plots}
\end{figure}
\end{document}